\documentclass{article}

% if you need to pass options to natbib, use, e.g.:
\PassOptionsToPackage{sort, numbers}{natbib}
% before loading neurips_2022

% ready for submission
% \usepackage{neurips_2022}

% to compile a preprint version, e.g., for submission to arXiv, add the
% [preprint] option:
\usepackage[preprint]{neurips_2022}

% to compile a camera-ready version, add the [final] option, e.g.:
% \usepackage[final]{neurips_2022}

% to avoid loading the natbib package, add option nonatbib:
%    \usepackage[nonatbib]{neurips_2022}

\usepackage[utf8]{inputenc} % allow utf-8 input
\usepackage[T1]{fontenc}    % use 8-bit T1 fonts
\usepackage{hyperref}       % hyperlinks
\usepackage{amsmath}
\usepackage{url}            % simple URL typesetting
\usepackage{booktabs}       % professional-quality tables
\usepackage{amsfonts}       % blackboard math symbols
\usepackage{nicefrac}       % compact symbols for 1/2, etc.
\usepackage{microtype}      % microtypography
\usepackage{xcolor}         % colors
\usepackage[english]{babel}
\usepackage{amsthm}
\usepackage{amssymb}
\usepackage{graphicx}
\usepackage{subfig}
\usepackage{mathtools}
\usepackage{tablefootnote}
\usepackage{float}
\usepackage{soul}
\usepackage{xcolor}
\usepackage{verbatim}
\usepackage{enumitem}
\usepackage[export]{adjustbox}
\def\eqref#1{(\ref{#1})}

\title{ComGAN: Toward GANs Exploiting Multiple Samples}

% The \author macro works with any number of authors. There are two commands
% used to separate the names and addresses of multiple authors: \And and \AND.
%
% Using \And between authors leaves it to LaTeX to determine where to break the
% lines. Using \AND forces a line break at that point. So, if LaTeX puts 3 of 4
% authors names on the first line, and the last on the second line, try using
% \AND instead of \And before the third author name.

\author{%
  Haeone~Lee \\
  Independent Researcher \\
  \texttt{poiroth946@gmail.com} \\
  % examples of more authors
  % \And
  % Coauthor \\
  % Affiliation \\
  % Address \\
  % \texttt{email} \\
  % \AND
  % Coauthor \\
  % Affiliation \\
  % Address \\
  % \texttt{email} \\
  % \And
  % Coauthor \\
  % Affiliation \\
  % Address \\
  % \texttt{email} \\
  % \And
  % Coauthor \\
  % Affiliation \\
  % Address \\
  % \texttt{email} \\
}

\begin{document}
\newtheorem{theorem}{Theorem}
\newtheorem{proposition}{Proposition}
\newtheorem{lemma}{Lemma}

\maketitle

\begin{abstract}
In this paper, we propose \emph{ComGAN}(ComparativeGAN) which allows the generator in GANs to refer to the semantics of comparative samples(e.g. real data) by comparison. ComGAN generalizes relativistic GANs by using arbitrary architecture and mostly outperforms relativistic GANs in simple input-concatenation architecture. To train the discriminator in ComGAN, we also propose \emph{equality regularization}, which fits the discriminator to a neutral label for equally real or fake samples. Equality regularization highly boosts the performance of ComGAN including WGAN while being exceptionally simple compared to existing regularizations. Finally, we generalize comparative samples fixed to real data in relativistic GANs toward fake data and show that such objectives are sound in both theory and practice. Our experiments demonstrate superior performances of ComGAN and equality regularization, achieving the best FIDs in 7 out of 8 cases of different losses and data against ordinary GANs and relativistic GANs.
\end{abstract}

\section{Introduction}

Generative Adversarial Networks(GANs)\citep{gan2014} train the generator $G$ to generate fake data that follows the distributions of real data by fooling the discriminator $D$ which distinguishes fake data from real data. GANs have shown promising performance in generative modeling, being able to generate photo-realistic images\citep{biggan, karras2019style, styleganv2}. However, they suffer from training instability stemming from the non-stationary adversarial game\citep{biggan, improvedtech, principled}, which might lead to mode collapse where the generated samples have very few modes or complete failure, e.g., generating only random noises. 

To improve stability, several works explored applying different loss functions from vanilla GAN\citep{ganscreatedequal} such as least square loss\citep{mao2017lsgan}(LSGAN), hinge loss \citep{lim2017geometric}(HingeGAN), and wasserstein loss\citep{arjovsky2017wasserstein}(WGAN). Especially, WGAN\citep{arjovsky2017wasserstein} minimizes the wasserstein distance induced from integral probability metrics(IPMs)\citep{ipm} which is mathematically weaker than other divergences, not causing diverging or vanishing gradients in theory. Along with WGAN, various GANs adopting losses from IPMs(IPM-GANs)\citep{fishergan, mcgan, cramergan} were also claimed to improve stability upon vanilla GAN. Apart from changing the loss function, other works regularize the discriminator to improve GANs training. The common approaches are constraining lipschitz norm\citep{wgan-gp, miyato2018spectral}, regularizing the input gradient norm\citep{dragan, r1reg}, or posing invariance over data augmentation\citep{consistencyv2, zhang2019consistency} for better conditioning of the discriminator. Still, applying regularization might be computationally demanding in the case of computing input gradient\citep{wgan-gp, dragan, r1reg}. Also, \citet{ganscreatedequal} showed that GANs using different losses do not consistently outperform vanilla GAN. 

Instead, some previous works explored utilizing multiple samples in the discriminator, which can be paired with any GAN losses. \citet{jolicoeur2018relativistic} proposed relativistic GANs where the discriminator logit is calculated by both real and fake samples after subtracting one from the other, i.e. $C(x)-C(y)$ where $x,y \sim p_{d,g}$. The author claims that relativistic GANs improve training stability by (i) reflecting prior knowledge that half of samples in mini-batch are fake and (ii) following training procedure in real divergence minimization and (iii) resembling IPM-GANs which are known to be more stable than other variants. Similarly, \citet{lin2018pacgan} proposed PacGAN where the discriminator is trained by multiple real and fake samples, matching divergence between joint distributions of n real and fake samples. PacGAN inhibits mode-collapse due to the property of n-packed joint distributions\citep{lin2018pacgan}. Similar batch-wise discrimination was also attempted in \citep{improvedtech} to alleviate mode-collapse.

In this paper, we propose ComparativeGAN(ComGAN) which generalizes the relativistic GANs toward arbitrary architecture. Departing from logit-level comparison in relativistic GANs, ComGAN allows the generator to refer to the semantics of the comparative sample(e.g. real data) when updating fake samples, mostly outperforming relativistic GANs under simple input concatenation. While \citet{dualcontrastive} also proposed to employ real data as a reference in the discriminator, their training relied on the attention mechanism, different from ComGAN which naturally includes reference samples in loss functions. In addition, we show that relativistic GAN overlooks training terms for equally real or fake data and propose \emph{equality regularization} to include such terms. Equality regularization highly boosts the performance of ComGAN, including WGAN and vanilla relativistic GAN. In contrast to gradient penalty regularizations\citep{wgan-gp, r1reg, dragan} and spectral normalization\citep{miyato2018spectral}, equality regularization is much more simple and easy to implement. Finally, we show that PacGAN and ComGAN are tied for matching either \emph{packed} or \emph{swapped} joint distributions and derive a regularization for PacGAN which corresponds to equality regularization in ComGAN. In our experiments, the combination of ComGAN and equality regularization outperformed relativistic GANs and ordinary GANs in 7 out of 8 cases under various losses and data. To summarize, our contributions are as follows:
\begin{itemize}[leftmargin=*]
\item We propose ComGAN which enables semantic comparison from relativistic GANs and demonstrate improved performance.
\item We generalize comparative samples which were fixed to real data in relativistic GANs towards fake data and show such objective is sound in both theory and practice.
\item We propose equality regularization which improves the performance of ComGAN including WGAN while being exceptionally simple compared to existing regularizations.
\item We derive a regularization objective for PacGAN corresponding to equality regularization in ComGAN, which can train a WGAN-like objective without other regularization.  
\end{itemize}
The last of this paper is organized as follows: we review related works in Section~2 and define notations of GANs in Section~3. In Section~4, we introduce ComGAN and equality regularization and analyze them in statistical divergences and optimal discriminators. We summarize our experimental results in Section~5 and conclude the paper in Section~6.

\section{Related Works}
\paragraph{Generative adversarial networks}
GANs\citep{gan2014} are powerful generative models that have been adopted in various tasks such as synthesizing photo-realistic images\citep{biggan, karras2019style, styleganv2}, super-resolution\citep{wang2018esrgan, deepunfolding, srntt}, domain translation\citep{unit, starganv2}. However, GANs training is known to be unstable, resulting in mode collapse\citep{improvedtech, principled, biggan}. To resolve this, various mechanisms have been proposed such as using different loss functions\citep{mao2017lsgan, arjovsky2017wasserstein, lim2017geometric} and applying regularizations\citep{r1reg, miyato2018spectral, zhang2019consistency, biggan, styleganv2}.

\paragraph{Regularizing discriminators}
To improve and stabilize GANs training, numerous works have focused on regularizing the discriminator. \citep{wgan-gp} proposed gradient penalty which fits the gradient norm of the discriminator to 1.0 to satisfy lipschitz constraints. Similarly, \citep{r1reg} penalizes the gradient-norm of real data in the discriminator and \citep{dragan} fits the gradient norm around real data to 1.0, which was shown to be effective to mitigate mode-collapse. \citep{miyato2018spectral} proposed spectral normalization which divides the discriminator weights by their spectral norms for capacity regularization. Instead of relying on lipschitz constraint, consistency regularization\citep{zhang2019consistency, consistencyv2} imposes the discriminator output to be consistent under different data augmentation, reducing computational overhead. Our equality regularization in ComGAN is much more simple and easy to implement compared to gradient penalties and spectral normalizations. Also, it is free from choosing proper data augmentation suitable for training data compared to consistency regularization. 

\paragraph{Employing multiple samples in discriminators}
Instead of employing regularizations, other works improve GANs by modifying discriminator structures to exploit multiple samples. \citep{jolicoeur2018relativistic} proposed relativistic GANs which calculate the discriminator logit by using both real and fake samples after subtracting one from the other. Relativistic GANs were claimed to improve training stability\citep{jolicoeur2018relativistic} and have been adopted in various tasks such as super-resolution\citep{wang2018esrgan, deepunfolding} and deblurring\citep{kupyn2019deblurgan}. Similarly, \citep{lin2018pacgan} proposed PacGAN which trains a discriminator by multiple real samples and fake samples to surpass mode-collapse using the property of joint distributions. \citep{improvedtech} proposed mini-batch discrimination which is similar to PacGAN except for using more specialized architecture. Our work generalizes over relativistic GANs and permits the use of arbitrary architecture, enabling semantic comparison to provide better learning signals. We also show the connections between relativistic GANs and PacGAN in statistical divergences and optimal discriminators.

\paragraph{Employing reference samples}
Since our ComGAN enables semantic comparison between real data and fake data using arbitrary architecture, the generator is able to refer to real data when updating fake samples. Similarly, \citep{dualcontrastive} proposed GANs that employ real data as a reference in a discriminator using an attention mechanism in order to better guide the generator. However, their reference samples are unrelated to the loss function and complicated attention mechanism thus had to be used to combine reference samples in the training. Reference-based super-resolution(RefSR)\citep{srntt, deformconvsr, crossnet} also uses reference samples to synthesize an HR image from an LR image, given the information from a reference image(e.g. similar scenes). In RefSR, such information is directly synthesized to the generator in various ways, e.g., style transfer\citep{srntt}, warping\citep{crossnet}, deformable convolution\citep{deformconvsr} while ComGAN indirectly leverages reference samples by comparison in the discriminator. Finally, in text generation, \citep{selfadversarial} proposed \emph{self-improvement} which trains the generator using reinforcement learning by rewarding the fake sentences that are better than previously generated ones and the other way around for worse sentences. Self-improvement was claimed to solve reward sparsity issues and prevent mode collapse\citep{selfadversarial}. In fact, self-improvement is equivalent to employing fake data as a reference in ComGAN and we expect that ComGAN using fake data would have similar advantages.

\section{Background}
\label{headings}
\subsection{Generative adversarial networks}
GANs training involves an adversarial game between two models, the discriminator $D_\psi$ and the generator $G_\theta$, which are neural networks parameterized by $\psi$ and $\theta$. We express the training objectives of $D$ and $G$ as follows(we omitted the expression of $\psi$ and $\theta$ for simplicity):
\begin{align}
\label{eq:vanilla_d}
L_D (\text{min})&=E_{x\sim{p_d}}[f_1(C(x))] + E_{x\sim{p_g}}[f_2\left(C(x)\right)] \\ 
\label{eq:vanilla_g}
L_G (\text{min})&=E_{x\sim{p_d}}[g_1(C(x))] + E_{x\sim{p_g}}[g_2(C(x))]
\end{align}
where $C(x)$ denotes a discriminator logit function which corresponds to a pre-activation value of the discriminator\footnote{We also use the term ``discriminator" to refer to $C(x)$}, i.e. $D(x)=\mathcal{A}(C(x))$ for output activation $\mathcal{A(\cdot)}$, and $f_1$, $f_2$, $g_1$, $g_2$ are scalar to scalar functions depending on the loss function applied, and $p_d$, $p_g$, $p_z$ are real data distribution, fake data distribution, and generator prior(e.g. gaussian) respectively. We assume that both $L_D$ and $L_G$ are minimized unless otherwise specified by $L_D(\text{max})$ and $L_G(\text{max})$ which are $-L_D(\text{min})$ and $-L_G(\text{min})$. By setting $f_1(x)=\log(1+e^{-x})$ and $f_2(x)=\log(1+e^x)$, one can recover vanilla discriminator loss used in SGAN(standard-GAN)\citep{gan2014}, expressed by
\begin{equation*}
L_D=E_{x\sim{p_d}}[-\log D(x)]+E_{x\sim{p_g}}[-\log(1 - D(x))] \tag{SGAN}
\end{equation*}
where $D(x)=\sigma(C(x))$ and $\sigma(\cdot)$ denotes sigmoid function. In such a case, $D(x)$ can be seen as a probability that $x$ is real. For the generator objective, setting $g_1=-f_1$, $g_2=-f_1$ and $g_1=f_2$, $g_1=f_2$ respectively correspond to saturating loss and non-saturating loss\citep{gan2014, jolicoeur2018relativistic}. Under the assumption of the optimal discriminator, GANs can also be interpreted as a divergence minimization between real data and fake data, e.g., Jensen-Shannon divergence(JSD) in SGAN\citep{gan2014}, {\it f}-divergence\citep{fgan} or IPMs(Integral Probability Metrics)\citep{arjovsky2017wasserstein, otgan, mcgan}. Normally, IPM-GANs are known to have benefits in stability due to the desirable properties of  their metrics(e.g. weak divergence)\citep{arjovsky2017wasserstein, mcgan}.    

\subsection{Relativistic GANs}
Relativistic GANs modify the discriminator structure in GANs to improve stability\citep{jolicoeur2018relativistic}. Relativistic GANs are divided into RGAN and RaGAN by using two samples or multiple samples, whose discriminator objectives are given by
\begin{align*}
L_D &= E_{x\sim{p_d}, z\sim{p_z}}[f_1(C(x)-C(G(z)))+f_2(C(G(z))-C(x))] \tag{RGAN} \\
L_D &= E_{x\sim{p_d}}[f_1(C(x)-\overline{C(x_f)})] + E_{z\sim{p_z}}[f_2(C(G(z))-\overline{C(x_r)})] \tag{RaGAN}
\end{align*}
where $\overline{C(x_f)}=E_{z\sim{p_z}}[C(G(z))]$ and $\overline{C(x_r)}=E_{x\sim{p_d}}[C(x)]$. The generator losses are defined using the same $g_1$ and $g_2$ in ordinary GANs. Therefore, RGAN and RaGAN are equivalent to substituting the logit value in ordinary GANs with the value after subtracting the logit of opponent data. Note that WGAN\citep{arjovsky2017wasserstein} can be seen as a special case of RGAN and RaGAN, which corresponds to $f_1=-I$ and $f_2=I$ without consideration of capacity constraints(e.g. weight clipping\citep{arjovsky2017wasserstein}, gradient penalty\citep{wgan-gp}, etc.).

\section{Method}
\label{others}
\subsection{ComGAN}
To introduce our method of ComGAN, we first note that the discriminator objective of RGAN can be re-expressed by 
\begin{equation*}
L_D=E_{x,y\sim{p_{d,g}}}[f_1(C(x)-C(y))]+E_{x,y\sim{p_{g,d}}}[f_2(C(x)-C(y))]
\end{equation*}
where $p_{d,g}$ is the joint distribution of two independent real and fake data, and $p_{g,d}$ is similarly defined. A similar objective also holds for the generator. Instead of sticking to the subtraction form of $C(x)-C(y)$, we generalize a discriminator structure by arbitrary $C(x, y)$, resulting in the following:
\begin{align}
\label{eq:comgan_d}
L_D&=E_{x,y\sim{p_{d,g}}}[f_1(C(x,y))]+E_{x,y\sim{p_{g,d}}}[f_2(C(x,y))] \\ 
\label{eq:comgan_g}
L_G&=E_{x,y\sim{p_{d,g}}}[g_1(C(x,y))]+E_{x,y\sim{p_{g,d}}}[g_2(C(x,y))]
\end{align}
where $C(x,y)$ is the discriminator logit function. Note that RGAN is a special case of (3) and (4), corresponding to letting $C(x,y)=\phi(x)- \phi(y)$ where $\phi:\mathbb{R}^{n_d} \rightarrow \mathbb{R}$ and $n_d$ is the dimension of input data. In the case of using cross-entropy loss to train the discriminator, (3) is equal to
\begin{equation}
\label{eq:scomgan_d}
L_D = E_{x,y\sim{p_{d,g}}}[-\log D(x,y)]+E_{x,y\sim{p_{g,d}}}[-\log(1 - D(x,y))]
\end{equation}
where $D(x,y)=\sigma(C(x,y))$ and $\sigma(\cdot)$ denotes sigmoid function. Intuitively, training the discriminator to minimize \eqref{eq:scomgan_d} tasks $D(x,y)$ to compare the `reality' of $x$ and $y$ in the range of 0-1 and $D(x, y)$ can be understood as a probability that $x$ is more real than $y$ (i.e. $D(x, y)\approx 1$ indicates $x$ is more real than $y$ and vice-versa). Motivated by this, we name GANs using equations \eqref{eq:comgan_d} and \eqref{eq:comgan_g} \emph{ComGAN}(ComparativeGAN). ComGAN allows the use of arbitrary discriminator structure for $C(x, y)$ and includes RGAN as its special case. Such property is useful since advanced neural network architecture(e.g. attention) can be utilized to compare $x$ and $y$(see Section~4.2). Meanwhile, in SComGAN(standard-ComGAN) which employs the objective of \eqref{eq:scomgan_d}, the optimal discriminator $D^*$ is given by
\begin{equation*}
    D^*(x,y)=\frac{p_{d,g}(x,y)}{p_{d,g}(x,y)+p_{g,d}(x,y)} 
\end{equation*}
Intuitively, we see that $D^*(x,y)$ imposes the value close to 1 if sample $(x,y)$ has a high likelihood of $p_{d,g}$ and a low likelihood of $p_{g,d}$ and the other way around for $p_{g,d}$. The following proposition gives a further intuition of the behavior of ComGAN.
\begin{proposition}
The optimal discriminator of SComGAN is equal to $\sigma(C^*(x)-C^*(y))$ where $C^*$ is the optimal discriminator logit function of SGAN.
\end{proposition}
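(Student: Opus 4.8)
The plan is to reduce everything to the closed form of the optimal SComGAN discriminator already displayed above, $D^*(x,y)=p_{d,g}(x,y)/\bigl(p_{d,g}(x,y)+p_{g,d}(x,y)\bigr)$, and to rewrite it in terms of the data marginals. By construction $p_{d,g}$ is the product of two \emph{independent} samples, one from $p_d$ and one from $p_g$, so $p_{d,g}(x,y)=p_d(x)p_g(y)$ and, symmetrically, $p_{g,d}(x,y)=p_g(x)p_d(y)$. Substituting yields $D^*(x,y)=\dfrac{p_d(x)p_g(y)}{p_d(x)p_g(y)+p_g(x)p_d(y)}$.

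Next I would invoke the optimal discriminator of SGAN, $D^*_{\mathrm{SGAN}}(x)=p_d(x)/\bigl(p_d(x)+p_g(x)\bigr)$ — the classical computation in \citep{gan2014} — and pass to the logit: since $D=\sigma(C)$ we get $C^*(x)=\sigma^{-1}\!\bigl(D^*_{\mathrm{SGAN}}(x)\bigr)=\log\bigl(p_d(x)/p_g(x)\bigr)$, so that $e^{C^*(x)}=p_d(x)/p_g(x)$.

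It then remains to evaluate $\sigma\bigl(C^*(x)-C^*(y)\bigr)$. Using $\sigma(a-b)=e^{a}/(e^{a}+e^{b})$ together with the previous identity, this equals $\dfrac{p_d(x)/p_g(x)}{p_d(x)/p_g(x)+p_d(y)/p_g(y)}$; multiplying numerator and denominator by $p_g(x)p_g(y)$ recovers exactly the expression for $D^*(x,y)$ obtained in the first paragraph, which proves the claim. This last step is a one-line algebraic identity, so no real difficulty is expected there.

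The only point requiring care — and the one I would flag as the main obstacle, albeit a mild one — is the behaviour on points outside the common support, where $C^*$ takes the values $\pm\infty$ and the ratios $p_d/p_g$ are ill-defined. One should either state the proposition on $\{p_d>0\}\cap\{p_g>0\}$, where both discriminators are genuinely pinned down, or supply a short limiting argument showing the identity survives at the boundary (for instance, both $D^*(x,y)$ and $\sigma(C^*(x)-C^*(y))$ tend to $1$ as $p_g(x)\to 0$ with $p_d(x),p_g(y)>0$). Modulo this measure-theoretic bookkeeping, the proof is purely computational.
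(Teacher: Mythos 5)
Your proof is correct and follows essentially the same route as the paper's: both rely on the closed form $D^*(x,y)=p_{d,g}/(p_{d,g}+p_{g,d})$, the independence factorization $p_{d,g}(x,y)=p_d(x)p_g(y)$, and the identity $\frac{a}{a+b}=\sigma(\log(a/b))$ to reduce the claim to $C^*(x)=\log(p_d(x)/p_g(x))$. The only difference is cosmetic (you verify the identity starting from $\sigma(C^*(x)-C^*(y))$ rather than from $D^*(x,y)$), plus a support caveat the paper silently omits.
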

\begin{proof}
Note that the optimal discriminator of SGAN is expressed by
\[D^*(x)=\frac{p_d(x)}{p_d(x)+p_g(x)}=\sigma\left(\log\frac{p_d(x)}{p_g(x)}\right). \hspace{0.1cm} \therefore \hspace{0.1cm} C^*(x)=\log\frac{p_d(x)}{p_g(x)}\]
Meanwhile, 
\[D^*(x,y)=\sigma\left(\log\frac{p_{d,g}(x,y)}{p_{g,d}(x,y)}\right)=\sigma\left(\log\frac{p_d(x)}{p_g(x)}-\log\frac{p_d(y)}{p_g(y)}\right) \qedhere \]
\end{proof}
Thus, the optimal discriminator of SComGAN equals subtracting the logits of the optimal SGAN discriminator with respect to two input samples, before applying the sigmoid function. In the case of RGAN where $C(x,y)=\phi(x)- \phi(y)$, we conclude $\phi^*(x)=C^*(x)+k \text{ (constant)}$. Meanwhile, in the case of LSGAN using label 1/-1, we observe a conflict between the optimal discriminator given by $D^*(x,y)=\frac{p_{d,g}(x,y)-p_{g,d}(x,y)}{p_{d,g}(x,y)+p_{g,d}(x,y)}$\citep{mao2017lsgan} and an RGAN discriminator $D(x,y)=\phi(x)-\phi(y)$($\because$ $\mathcal{A}=I$) since the former cannot be expressed by the latter. 

In divergence minimization, ComGAN matches the divergence between $p_{d,g}$ and $p_{g,d}$ as shown in Lemma 1.  
\begin{lemma}
Suppose $L_G$ in \eqref{eq:vanilla_g} is equal to $D(p_d \parallel p_g)$ under the optimal discriminator $C^*(x)$. $L_G$ in \eqref{eq:comgan_g} defined by the same $f_1$, $f_2$, $g_1$, $g_2$ is equal to $D(p_{d,g} \parallel p_{g,d})$ under $C^*(x,y)$.
\end{lemma}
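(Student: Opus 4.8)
The plan is to exploit the structural identity between the ComGAN objectives \eqref{eq:comgan_d}--\eqref{eq:comgan_g} and the vanilla objectives \eqref{eq:vanilla_d}--\eqref{eq:vanilla_g}: the ComGAN objectives are obtained from the vanilla ones by the formal substitution $p_d \mapsto p_{d,g}$ and $p_g \mapsto p_{g,d}$, replacing a single sample $x$ drawn on the data space by a pair $(x,y)$ drawn on the product space, while keeping $f_1,f_2,g_1,g_2$ and the way the logit feeds into these scalar functions untouched. Since the hypothesis $L_G = D(p_d\parallel p_g)$ in the vanilla case is reached by (i) solving $L_D$ for the optimal discriminator and (ii) substituting it back into $L_G$, and neither step refers to anything about the underlying sample space beyond the pair of densities involved, the whole derivation transports verbatim to the product space. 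Note that the statement is an identity of values (a functional of the two distributions), not a claim about the generator's optimization landscape, which is precisely what makes this transport legitimate.

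Concretely, I would first recall that the vanilla optimal discriminator is obtained pointwise: for each $x$, $C^*(x)$ minimizes $p_d(x) f_1(c) + p_g(x) f_2(c)$ over $c$, so $C^*(x) = \Phi\!\left(p_d(x), p_g(x)\right)$ for a function $\Phi$ depending only on $f_1,f_2$. Substituting into $L_G$ gives $L_G = \int \big[\, p_d(x) g_1(\Phi(p_d(x),p_g(x))) + p_g(x) g_2(\Phi(p_d(x),p_g(x))) \,\big]\,dx$; by hypothesis this equals $D(p_d\parallel p_g)$, which exhibits $D$ as a divergence with a fixed, space-agnostic integrand $\Psi(a,b) = a\,g_1(\Phi(a,b)) + b\,g_2(\Phi(a,b))$ determined solely by $f_1,f_2,g_1,g_2$.

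Next I would rerun the identical computation for ComGAN on $\mathbb{R}^{n_d}\times\mathbb{R}^{n_d}$. Minimizing $L_D$ in \eqref{eq:comgan_d} pointwise in $(x,y)$ amounts to minimizing $p_{d,g}(x,y) f_1(c) + p_{g,d}(x,y) f_2(c)$, the same scalar problem with $(p_d(x),p_g(x))$ replaced by $(p_{d,g}(x,y),p_{g,d}(x,y))$; hence $C^*(x,y) = \Phi(p_{d,g}(x,y),p_{g,d}(x,y))$ with the same $\Phi$ (for the cross-entropy choice this recovers the closed form $D^*(x,y)$ already stated above). Substituting into \eqref{eq:comgan_g} then yields $L_G = \int \Psi(p_{d,g}(x,y),p_{g,d}(x,y))\,dx\,dy = D(p_{d,g}\parallel p_{g,d})$, as claimed.

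The main obstacle, and the point requiring the most care, is justifying that the single symbol $D$ genuinely denotes one space-independent functional of a pair of distributions, so that ``$D(p_{d,g}\parallel p_{g,d})$'' is well defined and computed by the same recipe. This is immediate for $f$-divergences (including JSD for SGAN) through the integrand $\Psi$ above, but for IPM-type losses such as WGAN the optimal discriminator is not determined pointwise and one must argue differently: $L_G$ under the optimal (capacity-constrained) discriminator equals the supremum of $E_P[f] - E_Q[f]$ over the same function class of $f$, so the substitution $(P,Q)=(p_d,p_g)\mapsto(p_{d,g},p_{g,d})$ again returns $D$ evaluated at the new pair. I would also flag the minor bookkeeping issues: existence and (non)uniqueness of the pointwise minimizer $\Phi$ — any measurable selection suffices — and the fact that the $L_D(\text{max})$/$L_G(\text{max})$ conventions merely flip signs and leave the argument intact.
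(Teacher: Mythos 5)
Your proof is correct and takes essentially the same route as the paper, whose entire argument is the one-line observation that \eqref{eq:comgan_d}--\eqref{eq:comgan_g} are obtained from \eqref{eq:vanilla_d}--\eqref{eq:vanilla_g} by substituting $p_{d,g}$ and $p_{g,d}$ (which are themselves probability distributions) for $p_d$ and $p_g$. You simply unpack that substitution argument in more detail, including the pointwise optimality of the discriminator and the IPM caveat, which the paper leaves implicit.
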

\begin{proof} \eqref{eq:comgan_d} and \eqref{eq:comgan_g} are equal to substituting $p_d$ and $p_g$ with $p_{d,g}$ and $p_{g,d}$ in \eqref{eq:vanilla_d} and \eqref{eq:vanilla_g} and $p_{d,g}$ and $p_{g,d}$ are probability distributions. \qedhere
\end{proof}

Therefore, SComGAN and WComGAN minimize $JSD(p_{d,g} \parallel p_{g,d})$ and $W^1(p_{d,g} \parallel p_{g,d})$ respectively\citep{gan2014, arjovsky2017wasserstein}.

\subsection{Comparative sample}
In ComGAN, fake samples are trained to be considered more real than real samples in the comparison by the discriminator, aiming to increase the value of $C(G(z),y)$ where $y \sim{p_d}$. We refer to $y$ as \emph{comparative sample}, which is a reference employed to assist the training of $G(z)$. We suppose that the `good' discriminator should compare two samples in high-level semantics to avoid small details of comparative sample from affecting the update of $G(z)$. In such aspects, the subtraction structure of the discriminator in RGAN conducts comparison in a discriminator logit level, the highest semantics that can be captured by the discriminator(i.e. whether to be real or fake). Although this is desirable due to the aforementioned property, RGAN renders it impossible for the generator to directly utilize information from comparative samples such as textures or frequent patterns. In fact, our experiments show that simply using $x$ and $G(z)$ as network input by concatenation outperforms RaGAN in SComGAN and HingeComGAN. Taking $G(z)$ and $y$ as the network input might also help to alleviate the mode-collapse issue since the limited modes in $G(z)$ would easily be captured from the comparison to real data. Similarly, \citet{dualcontrastive} proposed reference attention where the attention map is computed upon real samples and is applied to the main layers of the discriminator as a reference. Such training objective can be expressed by $E_{x,y\sim{p_{d,d}}}[-\log D(x,y)]+E_{x,y\sim{p_{g,d}}}[-\log(1-D(x,y))]$ in the case of SGAN with attention mechanism employed in $D(x,y)$. In this case, the network might learn to ignore a reference $y$ since the input only differs by $x$ and the optimal discriminator does not change from vanilla SGAN. On the other hand, applying such attention-based architectures in ComGAN is promising since ComGAN naturally leverages comparative samples in the objective. In addition, ComGAN can be extended toward multiple comparative samples(i.e. comparing a real sample to the batch of fake samples and vice-versa) in order to broadly capture data semantics by using the discriminator $C(x,x_1,\dots,x_n)$ where $x$ is the main input and $x_1 \dots x_n$ are comparative samples. In case that $C(x,x_1,\dots,x_n)=\phi(x)-\frac{1}{n}\sum^{n}_{i=1}{\phi(x_i)}$, we recover RaGAN-like objective where mean logits are estimated using n samples yet other architectures such as simply concatenating $x,x_1 \dots, x_n$ for the network input would be possible.

Meanwhile, it should be noted that comparative can be generalized beyond real data as long as it can guide $G(z)$ to be improved upon itself. In previous work, \citet{selfadversarial} proposed \emph{self-improvement}, a reinforcement learning-based text generation algorithm that rewards the generator for generating sentences that are better than its previously generated samples, distinguished by the discriminator. Self-improvement was claimed to address the reward scarcity issue since improving upon fake samples is easier than improving upon real samples\citep{selfadversarial}. Employing fake data as comparative samples in ComGAN might enjoy the same advantages, e.g., preventing gradient-saturating issues, although the ability to refer to the semantics of real data will be lost. In the same sense, we consider using $G(z)$ itself as a comparative sample. From now on, we refer to ComGAN adopting fake data and the same sample(i.e. $G(z)$) for comparative samples as ComFakeGAN and ComSameGAN and express the generator objectives as follows(note that the discriminator objective remains unchanged):
\begin{equation*}
L_G=E_{x,z\sim{p_{g,z}}}[g_1(C(\overline{x}, G(z)))]+E_{z,x\sim{p_{z,g}}}[g_2(C(G(z), \overline{x}))]
\tag{ComFakeGAN}
\end{equation*}
\begin{equation*}
L_G=E_{z\sim{p_z}}[g_1(C(\overline{G(z)}, G(z)))]+E_{z\sim{p_z}}[g_2(C(G(z), \overline{G(z)}))]
\tag{ComSameGAN}
\end{equation*}
where $\overline{x}$ indicates that the training is not performed with respect to $x$. In the case of non-saturating loss, i.e. $g_1=f_2$ and $g_2=f_1$, the objective of ComFakeGAN is equivalent to replacing the real data part of the joint distributions in equation \eqref{eq:comgan_d} with fake data. Since the discriminator is trained to minimize \eqref{eq:comgan_d} with respect to $p_{d,g}$ and $p_{g,d}$, minimizing such an objective with respect to fake data used in place of real data is sound. Similarly, ComSameGAN can be seen as improving $G(z)$ upon itself. In statistical divergences, those objectives can be analyzed as Theorem 1. 
\begin{theorem}
Under the optimal discriminator, the following holds for derivatives of generator objectives with respect to the generator parameter $\theta$ in saturating SComFakeGAN, non-saturating SComFakeGAN, saturating SComSameGAN, and non-saturating SComSameGAN:
\begin{equation*}
\begin{alignedat}{3}
&\nabla_\theta L^{sat}_{SComFakeGAN}&&+\nabla_\theta L^{nonsat}_{SComFakeGAN}&&=2\nabla_\theta KL(p_g \parallel p_d) \\
&\nabla_\theta L^{sat}_{SComSameGAN}&&=\nabla_\theta L^{nonsat}_{SComSameGAN}&&=\nabla_\theta KL(p_g \parallel p_d)
\end{alignedat}
\end{equation*}
\end{theorem}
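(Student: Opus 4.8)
The plan is to push everything through the factorization of the optimal SComGAN discriminator. By Proposition~1 its logit is $C^*(x,y)=\phi(x)-\phi(y)$ with $\phi(x)=\log\frac{p_d(x)}{p_g(x)}$ (the optimal SGAN logit), and for the cross-entropy choices $f_1(t)=\log(1+e^{-t})$, $f_2(t)=\log(1+e^{t})$ one has the elementary identity $f_2(t)-f_1(t)=t$ together with $f_1'(0)=-\tfrac12$, $f_2'(0)=\tfrac12$. Before touching the objectives I would record the standard generator-gradient bookkeeping used under the ``optimal discriminator'' assumption: $C^*$ is held fixed as a function of its inputs, a detached (``$\overline{\,\cdot\,}$'') sample contributes nothing to $\nabla_\theta$, and for any fixed $h$ one has $\nabla_\theta E_{x\sim p_{g,\theta}}[h(x)]=\int\nabla_\theta p_{g,\theta}(x)\,h(x)\,dx$; since $\int\nabla_\theta p_{g,\theta}=0$ this gives in particular $\nabla_\theta E_{p_g}[\phi]=-\nabla_\theta KL(p_g\parallel p_d)$ (the ``entropy'' term $\int p_g\nabla_\theta\log p_g$ vanishes) and $\nabla_\theta E_{p_{\overline{g}}}[\cdot]=0$.

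For SComFakeGAN I would substitute $C=C^*$ and add the non-saturating objective $(g_1,g_2)=(f_2,f_1)$ to the saturating one $(g_1,g_2)=(-f_1,-f_2)$ term by term. The two ``first-argument-detached'' terms combine into $(f_2-f_1)\bigl(C^*(\overline{x},G(z))\bigr)=C^*(\overline{x},G(z))=\phi(\overline{x})-\phi(G(z))$, and the two ``second-argument-detached'' terms combine into $(f_1-f_2)\bigl(C^*(G(z),\overline{x})\bigr)=-\bigl(\phi(G(z))-\phi(\overline{x})\bigr)$. In both the detached fake sample $\overline{x}\sim p_g$ is a $\theta$-constant, so each collapses under $\nabla_\theta$ to $-\nabla_\theta E_{p_g}[\phi]=\nabla_\theta KL(p_g\parallel p_d)$, and summing the two gives $\nabla_\theta L^{sat}_{SComFakeGAN}+\nabla_\theta L^{nonsat}_{SComFakeGAN}=2\nabla_\theta KL(p_g\parallel p_d)$.

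For SComSameGAN the two arguments of $C$ are the same sample $G(z)$ (with one copy detached), so $C^*$ is evaluated on the diagonal where $C^*(x,x)=\phi(x)-\phi(x)=0$. Hence the chain rule gives $\nabla_\theta g_1\bigl(C^*(\overline{G(z)},G(z))\bigr)=-g_1'(0)\,\nabla_\theta\phi(G(z))$ and $\nabla_\theta g_2\bigl(C^*(G(z),\overline{G(z)})\bigr)=g_2'(0)\,\nabla_\theta\phi(G(z))$; taking the expectation over $z$ and using $\nabla_\theta E_{p_g}[\phi]=-\nabla_\theta KL(p_g\parallel p_d)$ yields $\nabla_\theta L_{SComSameGAN}=\bigl(g_1'(0)-g_2'(0)\bigr)\nabla_\theta KL(p_g\parallel p_d)$. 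Plugging in the slopes at $0$ — $(g_1'(0),g_2'(0))=(\tfrac12,-\tfrac12)$ for the non-saturating choice $(f_2,f_1)$ and again $(g_1'(0),g_2'(0))=(\tfrac12,-\tfrac12)$ for the saturating choice $(-f_1,-f_2)$ — gives coefficient $1$ in both cases, so the saturating and non-saturating gradients are equal and both equal $\nabla_\theta KL(p_g\parallel p_d)$.

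The main obstacle I expect is not the algebra but pinning down the conventions precisely: what the stop-gradient bar does to both the chain rule and the sampling law, and what ``under the optimal discriminator'' should mean when the generator gradient in principle also flows through $C^*_\theta$ (the standard resolution — and what I would spell out — is that the extra term vanishes for exactly the reason $\nabla_\theta E_{p_g}[\phi]=-\nabla_\theta KL(p_g\parallel p_d)$ holds). The SComSameGAN case additionally hinges on the observation that, because $C^*$ is identically zero on the diagonal, only the first-order information $g_i'(0)$ of the loss survives, which is precisely what forces the saturating and non-saturating variants to coincide.
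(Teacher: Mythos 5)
Your proposal is correct and follows essentially the same route as the paper's proof: both rest on the Proposition~1 factorization $C^*(x,y)=C^*(x)-C^*(y)$ with $C^*=\log(p_d/p_g)$, the stop-gradient convention for the detached comparative sample, the identity $\nabla_\theta E_{p_g}[C^*]=-\nabla_\theta KL(p_g\parallel p_d)$, and the cancellation $\sigma(u)+\sigma(-u)=1$ (which is exactly your $f_2-f_1=\mathrm{id}$ in differentiated form, and $\sigma(0)=\tfrac12$ is your $g_i'(0)=\pm\tfrac12$ on the diagonal). The only difference is that you sum the saturating and non-saturating objectives before differentiating while the paper differentiates each first and then sums, which is a purely cosmetic reorganization.
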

\proof See Appendix A.2

Therefore, our objectives are theoretically sound being related to minimizing KL divergence. In our experiments, ComFakeGAN using input concatenation surpassed RaGAN in SGAN and HingeGAN and showed similar performance to RaGAN in LSGAN, performing on par with ComGAN, while ComSameGAN failed in most cases. We speculate that the training signals given by $G(z)$ might be insufficient to train the same sample since the useful information not included in $G(z)$ will not be referred to.

\subsection{Equality regularization}
While the discriminator of SComGAN outputs a probability that $x$ is more real than $y$, equation \eqref{eq:scomgan_d} does not take the case that $x$ is equally real as $y$ into account since the training objective is confined to $x,y \sim p_{d,g}$ or $x,y \sim p_{g,d}$. To elicit more accurate training, we propose to fit $D(x, y) \approx 0.5$ in case of $x,y \sim{p_{d,d}}$ and $x,y \sim{p_{g,g}}$ as a form of regularization, expressed by
\begin{equation}
\label{eq:scomgan-eq}
\begin{split}
L_D=E_{x,y\sim{p_{d,g}}}[-\log D(x,y)]&+E_{x,y\sim{p_{g,d}}}[-\log (1-D(x,y))] \\+\lambda_{reg}E_{x,y\sim{p_{d,d}}}[CE(0.5 \parallel D(x,y))]&+\lambda_{reg}E_{x,y\sim{p_{g,g}}}[CE(0.5 \parallel D(x,y))]
\end{split}
\end{equation}
where $CE(p \parallel q)$ denotes cross-entropy between bernoulli distributions with a probability of $p$ and $q$ and $\lambda_{reg}$ controls the strength of the regularization(normally we set $\lambda_{reg}=1$). Note that the generator loss remains unchanged. For similar work, we account Cutmix\citep{yun2019cutmix} which improves classifier training by mixing patches from different images with their labels properly interpolated. Likewise, we perturb input (real, fake) and (fake, real) to (real, real) and (fake, fake) and impose a label of 0.5. For losses other than cross-entropy, we train $C(x,y) \approx 0$ since it corresponds to a neutral label or decision boundary in most GANs\footnote{This holds in all the GAN loss we use in experiments, e.g., SGAN, HingeGAN, LSGAN with -1/1 coding, WGAN.}, resulting in
\begin{equation}
\label{eq:comgan-eq}
\begin{split}
L_D=E_{x,y\sim{p_{d,g}}}[f_1(C(x,y))]&+E_{x,y\sim{p_{g,d}}}[f_2(C(x,y))]\\
+\lambda_{reg}E_{x,y\sim{p_{d,d}}}[\|C(x,y)\|^2]&+\lambda_{reg}E_{x,y\sim{p_{g,g}}}[\|C(x,y)\|^2]
\end{split}
\end{equation}
where $\| \cdot \|$ denotes l2 norm. We refer to newly introduced terms in \eqref{eq:scomgan-eq} and \eqref{eq:comgan-eq} as equality regularization and express ComGAN trained by \eqref{eq:scomgan-eq} or \eqref{eq:comgan-eq} as ComGAN-eq.   
\begin{proposition}
Under $\lambda_{reg}=1$, $D^*(x,y)$ of SComGAN-eq in \eqref{eq:scomgan-eq} equals $\frac{1}{2}+\frac{1}{2}(D^*(x)-D^*(y))$ where $D^*(x)$ is the optimal discriminator of SGAN. $D^*(x,y)$ of LSComGAN-eq in \eqref{eq:comgan-eq} equals $\frac{1}{2}(D^*(x)-D^*(y))$ where $D^*(x)$ is the optimal discriminator of LSGAN.
\end{proposition}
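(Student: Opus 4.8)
The plan is to mimic the pointwise-minimization argument already used in the excerpt for the optimal discriminator of SComGAN, now carried out with the two extra regularization integrands included. The structural fact I will exploit throughout is that, since $x$ and $y$ are drawn independently, all four joint densities factor over their marginals: $p_{d,g}(x,y)=p_d(x)p_g(y)$, $p_{g,d}(x,y)=p_g(x)p_d(y)$, $p_{d,d}(x,y)=p_d(x)p_d(y)$, $p_{g,g}(x,y)=p_g(x)p_g(y)$. Writing each of $L_D$ in \eqref{eq:scomgan-eq} and \eqref{eq:comgan-eq} as a single integral $\int \Phi(x,y;D(x,y))\,dx\,dy$, the optimum is attained by minimizing the integrand $\Phi$ pointwise for (almost) every fixed $(x,y)$, exactly as in \citep{gan2014}; I will state this reduction once and then treat the two losses separately.

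For SComGAN-eq: fix $(x,y)$, put $t=D(x,y)\in(0,1)$, and use $CE(0.5\parallel t)=-\tfrac12\log t-\tfrac12\log(1-t)$ together with $\lambda_{reg}=1$. Collecting the $\log t$ and $\log(1-t)$ coefficients, the pointwise objective has the form $-A\log t-B\log(1-t)$ with $A=p_d(x)p_g(y)+\tfrac12 p_d(x)p_d(y)+\tfrac12 p_g(x)p_g(y)$ and $B=p_g(x)p_d(y)+\tfrac12 p_d(x)p_d(y)+\tfrac12 p_g(x)p_g(y)$, whose unique minimizer on $(0,1)$ is $t^\ast=A/(A+B)$. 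Since $A+B=(p_d(x)+p_g(x))(p_d(y)+p_g(y))$, a short common-denominator computation gives $A/(A+B)=\tfrac12+\tfrac12\bigl(D^\ast(x)-D^\ast(y)\bigr)$ with $D^\ast(x)=p_d(x)/(p_d(x)+p_g(x))$: over the common denominator $2(p_d(x)+p_g(x))(p_d(y)+p_g(y))$ the numerator of $\tfrac12+\tfrac12(D^\ast(x)-D^\ast(y))$ is $p_d(x)p_d(y)+2p_d(x)p_g(y)+p_g(x)p_g(y)=2A$, which is exactly what is needed.

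For LSComGAN-eq: fix $(x,y)$, put $t=C(x,y)\in\mathbb{R}$, and use the $-1/1$-coded LSGAN losses $f_1(C)=(C-1)^2$, $f_2(C)=(C+1)^2$ with $\lambda_{reg}=1$, so that (with $\mathcal{A}=I$, hence $D^\ast(x)=C^\ast(x)=(p_d(x)-p_g(x))/(p_d(x)+p_g(x))$ for plain LSGAN) the pointwise objective is the strictly convex quadratic $p_d(x)p_g(y)(t-1)^2+p_g(x)p_d(y)(t+1)^2+\bigl(p_d(x)p_d(y)+p_g(x)p_g(y)\bigr)t^2$. Setting its derivative to zero, and using again that the sum of the three leading coefficients is $(p_d(x)+p_g(x))(p_d(y)+p_g(y))$, gives $t^\ast=\bigl(p_d(x)p_g(y)-p_g(x)p_d(y)\bigr)/\bigl((p_d(x)+p_g(x))(p_d(y)+p_g(y))\bigr)$. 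Expanding $\tfrac12(D^\ast(x)-D^\ast(y))$ over the same common denominator, the numerator collapses to $2\bigl(p_d(x)p_g(y)-p_g(x)p_d(y)\bigr)$, matching $t^\ast$ and completing the identification.

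I expect no genuine obstacle; the argument is the standard pointwise-optimality reduction plus two elementary algebraic simplifications. The only points requiring care are bookkeeping-level: (i) the normalization convention of the LSGAN loss — if $f_1,f_2$ are written with the customary factor $\tfrac12$, the identity holds with $\lambda_{reg}=\tfrac12$ rather than $1$, so I will fix $f_1(C)=(C-1)^2$, $f_2(C)=(C+1)^2$ up front to be consistent with $\lambda_{reg}=1$; and (ii) the usual measure-theoretic caveats (the integrand is minimized pointwise a.e., and on the null set where $p_d$ or $p_g$ vanishes the claimed formula still returns a value in $[0,1]$ consistent with the boundary minimizer), which I will dispatch in one sentence as in \citep{gan2014}.
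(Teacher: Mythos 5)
Your proposal is correct and follows essentially the same route as the paper's proof: both reduce each loss to a pointwise minimization of the combined integrand (a weighted cross-entropy with coefficients $p_{d,g}+\tfrac12 p_{d,d}+\tfrac12 p_{g,g}$ and $p_{g,d}+\tfrac12 p_{d,d}+\tfrac12 p_{g,g}$ for SComGAN-eq, and a strictly convex quadratic for LSComGAN-eq), use the independence factorization so the denominator becomes $(p_d(x)+p_g(x))(p_d(y)+p_g(y))$, and then rewrite the resulting ratio in terms of the marginal optimal discriminators. The only cosmetic difference is that the paper expands $D^*(x)(1-D^*(y))+0.5D^*(x)D^*(y)+0.5(1-D^*(x))(1-D^*(y))$ forward while you verify the same identity by clearing denominators; your explicit remarks on the LSGAN normalization and the pointwise a.e.\ caveat are consistent with the paper's conventions.
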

\proof See Appendix A.3

 Compared to Proposition 1, we see that SComGAN-eq corresponds to subtracting the output of SGAN discriminator(i.e. after the sigmoid) while vanilla SComGAN subtracts the logits(i.e. prior to the sigmoid). Interestingly, as LSComGAN-eq subtracts the output of LSGAN discriminator, it accords with LSRGAN architecture using $D(x,y)=\phi(x)-\phi(y)$. In our experiments, applying equality regularization significantly boosted performance in both RGAN and ComGAN using input-concatenation architectures. Especially, SRGAN-eq achieved the lowest Fr\'echet Inception Distance(FID)\citep{fid} among SGAN variants in CIFAR10\citep{cifar10} and LSComGAN-eq achieved the lowest FID in LSGAN variants of Tiny-ImageNet\citep{tinyimg}. Meanwhile, in the case of using multiple comparative samples(i.e. a batch of real samples $x_r$ and a batch of fake samples $x_f$), we propose to regularize $C(x,x_r) \approx 0$ where $x \sim{p_d}$ and $C(x,x_f) \approx 0$ where $x \sim{p_g}$, asking $x\sim{p_d}$ to be aligned with $x_r$ and similarly for $x\sim{p_g}$. In RaGAN, which currently is the only algorithm that employs multiple comparative samples, this results in the following:
\begin{equation}
\label{eq:ragan-eq}
\begin{split}
L_D=E_{x\sim{p_d}}[f_1(\phi(x)-E_{y\sim{p_g}}[\phi(y)])]&+E_{x\sim{p_g}}[f_2(\phi(x)-E_{y\sim{p_d}}[\phi(x)])]\\
+\lambda_{reg}E_{x\sim{p_d}}[\|\phi(x)-E_{y\sim{p_d}}[\phi(x)]\|^2]&+\lambda_{reg}E_{x\sim{p_g}}[\|\phi(x)-E_{y\sim{p_g}}[\phi(y)]\|^2].
\end{split}
\end{equation}
In our experiments, RaGAN-eq defined by \eqref{eq:ragan-eq} outperformed RaGAN by large margins demonstrating the effect of equality regularization. As WGAN can be seen as a special case of RaGAN, WGAN-GP-eq which uses $f_1=-I$ and $f_2=I$ in \eqref{eq:ragan-eq} also outperformed WGAN-GP\citep{wgan-gp}, achieving the lowest FID in Tiny-ImageNet experiments. Furthermore, we show that objectives \eqref{eq:comgan-eq} and \eqref{eq:ragan-eq} can be trained under simple $f_1=-I$ and $f_2=I$, which is similar to WGAN yet without regularizing discriminator capacity. We refer to such objectives as WComGAN-eq and WGAN-eq respectively, although the mathematical connection to Wasserstein distance is not proven. WComGAN-eq outperformed both WComGAN-GP and WComGAN-GP-eq and WGAN-eq obtained the lowest FID of 24.86 in our CIFAR10 experiments. Finally, we show that LSRGAN and LSRaGAN indirectly include equality and rf regularization in their objectives. To see this, note that the discriminator in RGAN is trained to satisfy $\phi(x)-\phi(y) \approx 1$ for $x,y \sim{p_{d,g}}$, which is equivalent to letting $\phi(x) \approx c_1$ and $\phi(x_f) \approx c_2$ for $c_1-c_2=1$ and $x,y \sim{p_{d,g}}$. In such a case, $\phi(x)-E_{x \sim p_d}[\phi(x)] \approx 0$ already holds for $x \sim p_d$. A similar result can be derived for LSRaGAN. This is also supported by our experiments where LSRaGAN shows notably better performance than other RaGANs and adding equality regularization to LSRaGAN rarely affects the performance. Still, the performance of LSRaGAN was lower than WGAN-eq.

Meanwhile, one might find \eqref{eq:ragan-eq} to be similar to LeCam regularization\citep{lecam} which was introduced to assist GANs training under limited data by matching the discriminator logit of current data to the moving average value of the opposite data. However, \eqref{eq:ragan-eq} matches the discriminator logits to the mean logit of the same data(not a moving average), and the mean logit $E_{x\sim{p_d}}[\phi(x)]$ and $E_{z\sim{p_z}}[\phi(G(z))]$ are also trainable(not a constant target). In the case of fixing $E_{x\sim{p_d}}[\phi(x)]$ and $E_{z\sim{p_z}}[\phi(G(z))]$ as constants denoted by $\alpha_r$ and $\alpha_f$ and assuming $\alpha_r=-\alpha_f$ in the discriminator, the generator objective of WGAN-eq becomes $(\frac{1}{2\lambda}+\alpha_r)\Delta(p_d \parallel p_g)$ in optimality where $\Delta(P \parallel Q)$ denotes LeCam divergence\citep{lecam, le2012asymptotic}(see Appendix B). 

\subsection{Regularizing PacGAN}
Similar to ComGAN, PacGAN\citep{lin2018pacgan} uses multiple samples in the discriminator. We hereby show that PacGAN is comparable to ComGAN with respect to logit operation and derive the corresponding regularization. Firstly, the objective of PacGAN is defined by
\begin{equation*}
L_D = E_{x_1\dots x_n\sim{p^n_d}}[f_1(C(x_1\dots x_n))]+E_{z_1\dots z_n\sim{p^n_g}}[f_2(C(x_1\dots x_n)))] \tag{PacGAN}
\end{equation*}
where $p_d^n$ and $p_g^n$ denote the joint distribution of $n$ independent real and fake samples\citep{lin2018pacgan}. Therefore, PacGAN matches the divergence between $p_d^n$ and $p_g^n$ while ComGAN matches $p_{d,g}$ and $p_{g,d}$.  \citet{lin2018pacgan} showed that the higher $n$ is, the more the generator is penalized against mode-collapse due to the property of joint distributions. For SPacGAN, the optimal discriminator is expressed by the sum of ordinary discriminators as Proposition 3.
\begin{proposition}
The optimal discriminator of SPacGAN, $D^*(x_1\dots x_n)$, is equal to $\sigma\left(\sum_{i=1}^{n} C^*(x_i)\right)$ where $C^*$ is the optimal discriminator logit function of SGAN.
\end{proposition}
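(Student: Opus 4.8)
The plan is to mirror the argument used for Proposition~1: first derive the closed form of the optimal SPacGAN discriminator by a pointwise optimization, then factor the joint densities using independence. Since SPacGAN is the cross-entropy instantiation of the PacGAN objective, its discriminator loss is $L_D = E_{x_1\dots x_n\sim p_d^n}[-\log D(x_1\dots x_n)] + E_{z_1\dots z_n\sim p_g^n}[-\log(1 - D(x_1\dots x_n))]$, which is exactly the SGAN loss with the pair $(p_d, p_g)$ on $\mathbb{R}^{n_d}$ replaced by $(p_d^n, p_g^n)$ on $\mathbb{R}^{n\cdot n_d}$. Following the classical computation, I would rewrite the two expectations as a single integral over the product space of $p_d^n(u)(-\log D(u)) + p_g^n(u)(-\log(1-D(u)))$ and observe that for each fixed $u$ the map $a\mapsto -\alpha\log a - \beta\log(1-a)$ is minimized at $a = \alpha/(\alpha+\beta)$; hence $D^*(x_1\dots x_n) = p_d^n(x_1\dots x_n)/\big(p_d^n(x_1\dots x_n)+p_g^n(x_1\dots x_n)\big)$.

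The remaining step is purely algebraic. By independence of the $n$ draws, $p_d^n(x_1\dots x_n) = \prod_{i=1}^n p_d(x_i)$ and $p_g^n(x_1\dots x_n) = \prod_{i=1}^n p_g(x_i)$. Writing the ratio form through the sigmoid, $D^*(x_1\dots x_n) = \sigma\!\left(\log\frac{\prod_i p_d(x_i)}{\prod_i p_g(x_i)}\right) = \sigma\!\left(\sum_{i=1}^n \log\frac{p_d(x_i)}{p_g(x_i)}\right)$. Since the proof of Proposition~1 already records that the optimal SGAN logit is $C^*(x) = \log\frac{p_d(x)}{p_g(x)}$, substituting yields $D^*(x_1\dots x_n) = \sigma\!\left(\sum_{i=1}^n C^*(x_i)\right)$, which is the claim.

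There is essentially no hard step here; the only point deserving a sentence is the same regularity caveat underlying the SGAN optimality proof — one needs the densities to exist with respect to a common dominating measure so that $p_d^n, p_g^n$ are genuine product densities and the pointwise minimization over $D(u)\in(0,1)$ is valid wherever $p_d^n(u)+p_g^n(u)>0$, with $D^*$ left unspecified on the null set where both vanish. Under these standard assumptions (already implicit in Proposition~1) the identification is immediate, and it also makes transparent why increasing $n$ sharpens the discriminator's response, as the optimal logit accumulates additively across the packed samples.
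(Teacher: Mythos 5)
Your proposal is correct and follows essentially the same route as the paper's proof: identify the optimal SPacGAN discriminator as $p_d^n/(p_d^n+p_g^n)$, rewrite it through the sigmoid of the log-ratio, and use independence to split the log of the product densities into $\sum_i \log\frac{p_d(x_i)}{p_g(x_i)} = \sum_i C^*(x_i)$. The paper simply states the optimal-discriminator form without re-deriving the pointwise minimization, whereas you spell it out along with the regularity caveats, but the argument is the same.
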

\proof See Appendix A.4

For $n=2$, this is comparable to Proposition 1 in ComGAN with respect to adding two logits or subtracting one logit from the other. In case we let $C(x_1 \dots x_n)=\sum^{n}_{i=1} \phi(x_i)$, deviating from the original implementation where the discriminator uses the concatenation of $x_1 \dots x_n$ as input\citep{lin2018pacgan}, we obtain $\phi^*(x)=C^*(x)$. Such architecture is invariant to input permutation, satisfying $C(x,y)=C(y,x)$ when $n=2$, compared to the property of $C(x,y)=-C(y,x)$ in RGAN.

Motivated by equality regularization in ComGAN, we now study the objectives that pose $C(x,y) \approx 0$ when $x,y \sim{p_{d,g}}$ and $x,y \sim{p_{g,d}}$ in PacGAN-2, which are expressed as follows for SPacGAN and the generalized version:
\begin{equation}
\label{eq:spacgan-rf}
\begin{split}
L_D=E_{x,y\sim{p_{d,d}}}[-\log D(x,y)]&+E_{x,y\sim{p_{g,g}}}[-\log (1-D(x,y))] \\
   +\lambda_{reg}E_{x,y\sim{p_{d,g}}}[CE(0.5 \parallel D(x,y))]&+\lambda_{reg}E_{x,y\sim{p_{g,d}}}[CE(0.5 \parallel D(x,y))]
\end{split}
\end{equation} 
\begin{equation}
\label{eq:pacgan-rf}
\begin{split}
L_D=E_{x,y\sim{p_{d,d}}}[f_1(C(x,y))]&+E_{x,y\sim{p_{g,g}}}[f_2(C(x,y))]\\
+\lambda_{reg}E_{x,y\sim{p_{d,g}}}[\|C(x,y)\|^2]&+\lambda_{reg}E_{x,y\sim{p_{g,d}}}[\|C(x,y)\|^2]
\end{split}
\end{equation} 
Since SPacGAN imposes a label of 1 for (real, real) and 0 for (fake, fake), it is reasonable to use the label of 0.5 for (real, fake) and (fake, real) which are `half-real'. From now on, we refer to added terms in \eqref{eq:spacgan-rf} and \eqref{eq:pacgan-rf} as rf regularization and PacGAN using rf regularization as PacGAN-rf.
\begin{proposition}
Under $\lambda_{reg}=1$, $D^*(x,y)$ of SPacGAN-rf in \eqref{eq:spacgan-rf} equals $\frac{1}{2}(D^*(x)+D^*(y))$ where $D^*(x)$ is the optimal discriminator of SGAN. $D^*(x,y)$ of LSPacGAN-rf in \eqref{eq:pacgan-rf} equals $\frac{1}{2}(D^*(x)+D^*(y))$ where $D^*(x)$ is the optimal discriminator of LSGAN.
\end{proposition}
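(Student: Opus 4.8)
The plan is to mirror the pointwise optimization argument used for Propositions 1 and 3 (and carried out in the appendix for Proposition 2). Since the generator is held fixed, each of the discriminator objectives \eqref{eq:spacgan-rf} and \eqref{eq:pacgan-rf} is an integral over $(x,y)$ of an integrand that depends on the discriminator only through its value $D(x,y)$ (equivalently $C(x,y)$) at that point, so it suffices to minimize the integrand pointwise, assuming as elsewhere in the paper that $C$ ranges over all measurable functions. Throughout I substitute the product forms $p_{d,d}(x,y)=p_d(x)p_d(y)$, $p_{g,g}(x,y)=p_g(x)p_g(y)$, $p_{d,g}(x,y)=p_d(x)p_g(y)$, $p_{g,d}(x,y)=p_g(x)p_d(y)$.

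For SPacGAN-rf, I write $CE(0.5\parallel D)=-\tfrac12\log D-\tfrac12\log(1-D)$ and collect, at a fixed $(x,y)$ with $\lambda_{reg}=1$, the coefficient of $-\log D$ as $a:=p_d(x)p_d(y)+\tfrac12 p_d(x)p_g(y)+\tfrac12 p_g(x)p_d(y)$ and the coefficient of $-\log(1-D)$ as $b:=p_g(x)p_g(y)+\tfrac12 p_d(x)p_g(y)+\tfrac12 p_g(x)p_d(y)$. Minimizing $-a\log D-b\log(1-D)$ over $D\in(0,1)$ gives $D^*(x,y)=a/(a+b)$. The key algebraic step is the factorization $a+b=(p_d(x)+p_g(x))(p_d(y)+p_g(y))$ together with $a=\tfrac12 p_d(x)(p_d(y)+p_g(y))+\tfrac12 p_d(y)(p_d(x)+p_g(x))$; dividing then yields $D^*(x,y)=\tfrac12\frac{p_d(x)}{p_d(x)+p_g(x)}+\tfrac12\frac{p_d(y)}{p_d(y)+p_g(y)}=\tfrac12(D^*(x)+D^*(y))$, recalling $D^*(x)=p_d(x)/(p_d(x)+p_g(x))$ for SGAN from Proposition 1.

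For LSPacGAN-rf, I use the $1/-1$ coding, i.e.\ $f_1(C)=(C-1)^2$, $f_2(C)=(C+1)^2$, with $\mathcal{A}=I$ so $D=C$. The pointwise integrand at $(x,y)$ is $p_d(x)p_d(y)(D-1)^2+p_g(x)p_g(y)(D+1)^2+\bigl(p_d(x)p_g(y)+p_g(x)p_d(y)\bigr)D^2$, a convex quadratic in $D$; setting its derivative to zero and again using that the total mass factors as $(p_d(x)+p_g(x))(p_d(y)+p_g(y))$ gives $D^*(x,y)=\frac{p_d(x)p_d(y)-p_g(x)p_g(y)}{(p_d(x)+p_g(x))(p_d(y)+p_g(y))}$. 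Finally, expanding $\tfrac12(D^*(x)+D^*(y))$ with $D^*(x)=\frac{p_d(x)-p_g(x)}{p_d(x)+p_g(x)}$ for LSGAN over the common denominator, the cross terms cancel and the numerator collapses to $p_d(x)p_d(y)-p_g(x)p_g(y)$, which matches.

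I do not expect a serious obstacle. The only delicate points are (i) justifying the pointwise reduction and the existence of the unconstrained minimizer inside the feasible range — for the cross-entropy case one checks $a/(a+b)\in(0,1)$ wherever $p_d,p_g>0$, and for the quadratic case convexity guarantees a unique stationary point — and (ii) keeping the bookkeeping straight so that $\lambda_{reg}=1$ is exactly what makes the $\tfrac12$-weights appear (any other $\lambda_{reg}$ breaks the clean average). These are the same considerations as in Propositions 1--3, so the argument should go through routinely.
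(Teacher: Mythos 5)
Your proposal is correct and follows essentially the same route as the paper's Appendix A.5: rewrite each objective as an integral, minimize the integrand pointwise, factor the normalizer as $(p_d(x)+p_g(x))(p_d(y)+p_g(y))$, and identify the result with $\tfrac12(D^*(x)+D^*(y))$ for the respective SGAN/LSGAN optimal discriminators. The only cosmetic difference is that the paper reaches the final identity by expanding $D^*(x)D^*(y)+0.5D^*(x)(1-D^*(y))+0.5(1-D^*(x))D^*(y)$ whereas you factor the numerator directly; these are the same algebra.
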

\proof See Appendix A.5

Proposition 4 of PacGAN and Proposition 2 of ComGAN are closely related and the differences are whether to sum the value of SGAN discriminators or subtract one from the other. In case of letting $C(x,y)=\phi(x)+\phi(x)$, we propose to fit $C(x,x_f)\approx0$ for $x\sim{p_d}$ and $x_f$ which represents a batch of fake samples, resulting in
\begin{equation}
\label{eq:pacgan-rf2}
\begin{split}
L_D=E_{x,y\sim{p_{d,d}}}[f_1(\phi(x)+\phi(y)))]&+E_{x,y\sim{p_{g,g}}}[f_2(\phi(x)+\phi(y)))]\\
+\lambda_{reg}E_{x\sim{p_d}}[\|\phi(x)+E_{y \sim{p_g}}[\phi(y)]\|^2]&+\lambda_{reg}E_{x\sim{p_g}}[\|\phi(x)+E_{y \sim{p_d}}[\phi(y)]\|^2].
\end{split}
\end{equation}
Especially, given $f_1=-I$, $f_2=I$ in \eqref{eq:pacgan-rf2}, we obtain an objective similar to WGAN and refer to it as WGAN-rf. In our experiments, WGAN-rf showed competitive performance to SGAN and LSGAN in CIFAR10\citep{cifar10} yet its performance was generally below WGAN-eq. 

In figure~\ref{fig:regularizations}, we visualize equality regularization and rf regularization in relativistic GAN and PacGAN using a geometric fashion inspired by \citep{lim2017geometric}. Especially, we let $\phi(x)=\langle w, \Phi_\psi(x) \rangle$ where $\Phi_\psi$ is discriminator feature extractor parameterized by $\psi$ and $w$ is a normal vector of the separating hyperplane(i.e. a decision boundary)\footnote{If bias $b$ in the final layer is excluded, this covers most discriminators implemented by neural networks, although RGAN and RaGAN do not depend on $b$ which is subtracted out.}. In RaGAN-eq, by training $\langle w, \Phi(x)-E_{x\sim{p_d}}[\Phi(x)] \rangle \approx 0$ where $x \sim{p_d}$, it is equal to asking $\Phi(x)-E_{x\sim{p_d}}[\Phi(x)]$ to be orthogonal to the normal vector or parallel to separating hyperplane, and similarly for fake data. We particularly used mean-matching GAN for visualization, whose objective is given by
\begin{equation}
\label{eq:mcgan}
 L_D\text{(max)}=\max_{\|w\|_2 \leq 1, \|\psi\|_\beta \leq c}\langle w, E_{x \sim{p_d}}[\Phi_\psi(x)]-E_{x \sim{p_g}}[\Phi_\psi(x)] \rangle
 \end{equation}
where $c$ is a finite constant. Mean-matching GAN can be seen as a special case of RGAN, RaGAN, and PacGAN using $f_1=-I$ and $f_2=I$ where the optimal normal vector $w^*$ is given by $cE_{x \sim{p_d}}[\Phi(x)]-cE_{x \sim{p_g}}[\Phi(x)]$\citep{lim2017geometric, mcgan}. As shown in Figure~\ref{fig:regularizations}(a), equality regularization poses feature vectors to form a parallel shape to the decision boundary in both real and fake data, which can roughly be interpreted as pushing each feature vector toward a line that is parallel to the decision boundary(blue dotted line) passing through the mean feature vector. Likewise, we observe that rf regularization in \eqref{eq:pacgan-rf2} asks $\Phi(x)+E_{x \sim{p_g}}[\Phi(x)]$ to be orthogonal to the normal vector in real data and similarly for fake data. As shown in Figure~\ref{fig:regularizations}(b) using mean-matching GAN, it is equal to encouraging $\Phi(x)+E_{x \sim{p_g}}[\Phi(x)]$(i.e. real features translated by the mean fake features) to lie on the decision boundary.

\begin{figure}[H]
    \centering
    \subfloat[\centering equality regularization]{{\includegraphics[width=6.0cm]{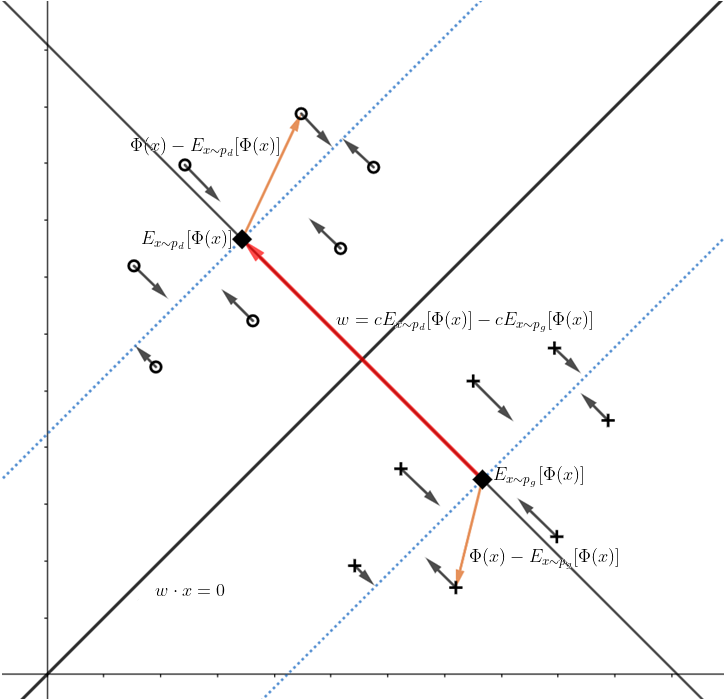} }}
    \qquad
    \subfloat[\centering rf regularization]{{\includegraphics[width=6.3cm]{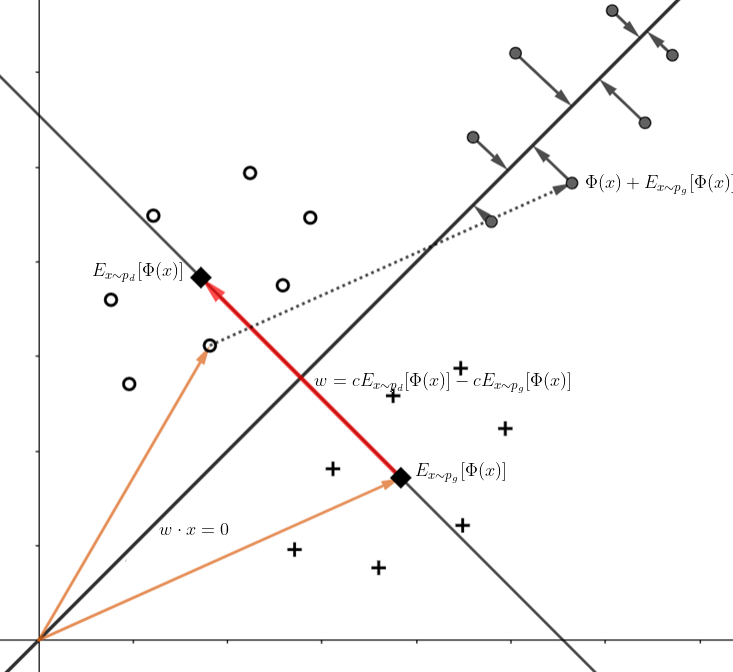} }}
    \caption{A visualization of the proposed regularizations. \textbf{(all)} Empty circles(o), cross marks(x), and filled rectangles($\blacksquare$) represent feature vectors of real data, fake data, and their mean feature vectors respectively and the black solid line represents the decision boundary with respect to the normal vector expressed by the red arrow. \textbf{(a)} equality regularization: real and fake data points are regularized to the direction of the blue dotted lines, generating a parallel shape to the decision boundary. \textbf{(b)} rf regularization: real data points transformed by the addition of the mean feature of fake data are regularized toward the decision boundary.}
    \label{fig:regularizations}
\end{figure}

\section{Experiments}
We experimented with our methods in CIFAR10 dataset\citep{cifar10} composed of 60000 $32\times32$ images(50k for training and 10k for test) and Tiny-Imagenet dataset\citep{tinyimg} composed of 100000 $64\times64$ images(90k for training and 10k for validation) using the loss functions of non-saturating SGAN\citep{gan2014}, LSGAN\citep{mao2017lsgan}, HingeGAN\citep{lim2017geometric}, and WGAN\citep{arjovsky2017wasserstein}. We implemented our algorithms upon StudioGAN framework\citep{kang2022studiogan} which provides the implementations of different GANs and various metrics for evaluation in order to assess GANs under a consistent environment with minimal changes. Our implementation is publicly available at \href{https://github.com/rl-max/PyTorch-StudioGAN}{here}\footnote{https://github.com/rl-max/PyTorch-StudioGAN}. Following the basic configuration of StudioGAN, we used the deep convolutional architecture replicated from \citep{dcgan} for training in CIFAR10 and used the ResNet architecture replicated from \citep{wgan-gp} for training in Tiny-ImageNet except for WGAN which only uses ResNet architecture. For input-concatenation architecture in ComGAN, we only modified the number of input channels and other parts of the networks were kept the same. To see further details on network architecture, we ask the readers to refer to \citep{kang2022studiogan} and \href{https://github.com/rl-max/PyTorch-StudioGAN}{our implementation}. For hyperparameters, we copied the default values of CIFAR10 training in StudioGAN and used the same values for Tiny-ImageNet experiments. Especially, we used the batch size of 64 and learning rate of 0.0002 for both discriminators and generators and set the discriminator to be updated twice per one generator update(i.e. $n_d=2$) in SGAN and LSGAN while we set $n_d=5$ in HingeGAN and WGAN. We employed Adam optimizer\citep{kingma2014adam} with $\beta_1=0.5$ and $\beta_2=0.999$ except for WGAN in Tiny-ImageNet where we use $\beta_1=0.0$ for stability. We trained SGAN, LSGAN, HingeGAN, and WGAN for 100k, 100k, 50k, and 50k steps in CIFAR10 and for 60k, 60k, 20k, and 20k steps in Tiny-ImageNet. We ran each algorithm 4 times and aggregated average performance and deviation. We also used mixed precision training\citep{micikevicius2017mixedprecision} in CIFAR10 experiments apart from WGAN. For evaluations, we calculated Fr\'echet Inception Distance(FID)\citep{fid} which is the feature-wise distance between real data and fake data, Inception-score\citep{improvedtech}, Precision/Recall\citep{precisionrecall}, and Density/Coverage\citep{dencvr} using test data in CIFAR10 and validation data in Tiny-ImageNet. Employing such various metrics for evaluation remarkably contributes to fair comparison since the performance of the model could vary along metrics\citep{kang2022studiogan}. 

\subsection{ComGAN experiments}
We first display the performance of RaGAN with a comparison to ordinary GAN in Table~\ref{table:ragan}. Note that WGAN can be seen as the special case of RaGAN.

\begin{table}[H]
  \caption{RaGAN results. We report the best FID and Inception-score during training averaged by 4 runs and Precision/Recall and Density/Coverage at the best FID with their standard deviations.}
  \label{table:ragan}
  \begin{center}
  \resizebox{\textwidth}{!}{
  \begin{tabular}{lllllll}
    \toprule
    \textbf{Algorithms} & FID $\downarrow$ & IS $\uparrow$ & Precision $\uparrow$ & Recall $\uparrow$ & Density $\uparrow$ & Coverage $\uparrow$ \\
    \toprule (CIFAR10) \\
    SGAN & 50.65($\pm$4.65) &  6.73($\pm$0.34) & 0.60($\pm$0.03) & 0.29($\pm$0.03) & 0.46($\pm$0.04) & 0.37($\pm$0.04)\\
    SRaGAN & 58.99($\pm$5.76) & 6.28($\pm$0.35) & 0.57($\pm$0.07) & 0.24($\pm$0.03) & 0.42($\pm$0.12) & 0.27($\pm$0.05)\\
    LSGAN & 47.95($\pm$17.71) & 6.71($\pm$0.57) & 0.62($\pm$0.01) & 0.38($\pm$0.15) & 0.48($\pm$0.04) & 0.39($\pm$0.11)\\
    LSRaGAN & 34.15($\pm$1.36) & \textbf{7.55}($\pm$0.15) & 0.63($\pm$0.02) & 0.51($\pm$0.01) & 0.54($\pm$0.04) & \textbf{0.50}($\pm$0.02)\\
    HingeGAN & 37.33($\pm$1.83) & 7.18($\pm$0.26) & \textbf{0.67}($\pm$0.02) & 0.33($\pm$0.01) & \textbf{0.64}($\pm$0.05) & 0.49($\pm$0.02)\\
    HingeRaGAN & 46.31($\pm$2.23) & 7.23($\pm$0.19) & 0.61($\pm$0.04) & 0.33($\pm$0.01) & 0.51($\pm$0.09) & 0.38($\pm$0.04)\\
    \textbf{WGAN-GP} & \textbf{33.20}($\pm$5.91) & 7.07($\pm$0.40) & 0.63($\pm$0.01) & \textbf{0.57}($\pm$0.05) & 0.54($\pm$0.03) & 0.49($\pm$0.07)\\
    \midrule (Tiny-ImageNet) \\
    SGAN & 76.94($\pm$3.45) & 6.99($\pm$0.24) & 0.49($\pm$0.02) & 0.11($\pm$0.03) & 0.33($\pm$0.04) & 0.22($\pm$0.01)\\
    SRaGAN & 87.13($\pm$3.17) & 6.70($\pm$0.25) & 0.42($\pm$0.05) & 0.07($\pm$0.01) & 0.22($\pm$0.05) & 0.17($\pm$0.01)\\
    LSGAN & 77.91($\pm$2.67) & 6.73($\pm$0.19) & 0.46($\pm$0.02) & 0.13($\pm$0.02) & 0.28($\pm$0.03) & 0.21($\pm$0.02)\\
    LSRaGAN & 73.70($\pm$4.29) & 6.99($\pm$0.29) & \textbf{0.52}($\pm$0.03) & 0.18($\pm$0.01) & \textbf{0.36}($\pm$0.04) & 0.23($\pm$0.02)\\
    HingeGAN & 78.46($\pm$2.02) & 7.05($\pm$0.30) & 0.48($\pm$0.06) & 0.12($\pm$0.03) & 0.31($\pm$0.06) & 0.21($\pm$0.01)\\
    HingeRaGAN & 96.38($\pm$5.87) & 6.05($\pm$0.26) & \textbf{0.52}($\pm$0.08) & 0.04($\pm$0.02) & 0.35($\pm$0.12) & 0.17($\pm$0.03)\\
    \textbf{WGAN-GP} & \textbf{62.96}($\pm$1.78) &\textbf{7.56}($\pm$0.21) & 0.47($\pm$0.01) & \textbf{0.29}($\pm$0.02) & 0.29($\pm$0.01) & \textbf{0.25}($\pm$0.01)\\
    \bottomrule 
  \end{tabular}}
  \end{center}
\end{table}

As shown in Table~\ref{table:ragan}, applying RaGAN in SGAN and LSGAN actually deteriorates the performance except for LSRaGAN which indirectly includes equality regularization. Also, WGAN-GP performs reasonably well achieving the best FIDs and Recalls in both CIFAR10 and Tiny-ImageNet datasets. In Figure~\ref{fig:ragan-gan} in Appendix D, we further show that performances of ordinary GAN and RaGAN are reversed at the later part of training as RaGAN diverges more slowly. This quite corresponds to the stability argument in the original paper\citep{jolicoeur2018relativistic}. In fact, experiments in \citep{jolicoeur2018relativistic} were conducted with stability focused, reporting FIDs of different algorithms at the specific training step(100k) and FIDs averaged over different time steps(i.e. performance variations during training). 

We also show the results of ComGAN using input-concatenation architecture in Table~\ref{table:comgan}.

\begin{table}[H]
  \caption{ComGAN using input-concatenation architecture results. Metrics were averaged by 4 runs. Values in brackets denote performance gain over RaGAN(see Table~\ref{table:cifar10} and \ref{table:tinyimg} for standard deviations).}
  \label{table:comgan}
  \begin{center}
  \resizebox{\textwidth}{!}{
  \begin{tabular}{lllllll}
    \toprule
    \textbf{Algorithms} & FID $\downarrow$ & IS $\uparrow$ & Precision $\uparrow$ & Recall $\uparrow$ & Density $\uparrow$ & Coverage $\uparrow$ \\
    \toprule (CIFAR10) \\
    SComGAN & 46.56(\textcolor{green}{-12.43}) & 6.82(\textcolor{green}{+0.54}) & 0.60(\textcolor{green}{+0.02}) & 0.32(\textcolor{green}{+0.08}) & 0.45(\textcolor{green}{+0.03}) & 0.38(\textcolor{green}{+0.11})\\
    LSComGAN & 76.03(\textcolor{orange}{+41.87}) & 5.14(\textcolor{orange}{-2.41}) & 0.65(\textcolor{green}{+0.02}) & 0.22(\textcolor{orange}{-0.3}) & 0.51(\textcolor{orange}{-0.03}) & 0.25(\textcolor{orange}{-0.24}) \\
    HingeComGAN & 37.28(\textcolor{green}{-9.03}) & 7.46(\textcolor{green}{+0.22}) & 0.65(\textcolor{green}{+0.05}) & 0.34(\textcolor{green}{+0.01}) & 0.62(\textcolor{green}{+0.11}) & 0.49(\textcolor{green}{+0.10})\\
    WComGAN-GP & 79.19(\textcolor{orange}{+45.99}) & 3.96(\textcolor{orange}{-3.10}) & 0.69(\textcolor{green}{+0.06}) & 0.09(\textcolor{orange}{-0.47}) & 0.60(\textcolor{green}{+0.06}) & 0.19(\textcolor{orange}{-0.30})\\
    \midrule (Tiny-ImageNet) \\
    SComGAN & 75.20(\textcolor{green}{-11.93}) & 7.57(\textcolor{green}{+0.87}) &  0.43(\textcolor{green}{+0.01}) & 0.13(\textcolor{green}{+0.07}) & 0.25(\textcolor{green}{+0.03}) & 0.20(\textcolor{green}{+0.04})\\
    LSComGAN & 82.58(\textcolor{orange}{+8.88}) & 6.93(\textcolor{orange}{-0.06}) & 0.45(\textcolor{orange}{-0.07}) & 0.11(\textcolor{orange}{-0.07}) & 0.28(\textcolor{orange}{-0.08}) & 0.19(\textcolor{orange}{-0.03})\\
    HingeComGAN & 85.45(\textcolor{green}{-10.93}) & 6.63(\textcolor{green}{+0.58}) & 0.43(\textcolor{orange}{-0.09}) & 0.09(\textcolor{green}{+0.05}) & 0.26(\textcolor{orange}{-0.10}) & 0.18(\textcolor{green}{+0.01})\\
    WComGAN-GP & 154.41(\textcolor{orange}{+91.45}) & 3.49(\textcolor{orange}{-4.07}) & 0.32(\textcolor{orange}{-0.15}) & 0.01(\textcolor{orange}{-0.29}) & 0.12(\textcolor{orange}{-0.17}) & 0.06(\textcolor{orange}{-0.19})\\
    \bottomrule 
  \end{tabular}}
  \end{center}
\end{table}

In Table~\ref{table:comgan}, we observe that using input concatenation architecture in ComGAN surpasses RaGAN by significant margins in SGAN and HingeGAN while it deteriorates the performance in LSGAN and WGAN(against WGAN-GP). Note that a such comparison might be unfavorable to ComGAN since RaGAN uses multiple comparative samples and LSRaGAN benefits from the implicit equality regularization effect. In fact, we show in Section~5.2 that LSComGAN with equality regularization outperforms LSRaGAN in Tiny-ImageNet and shows competitive performance in CIFAR10. Such results prove the potential of applying semantic comparison in the discriminator. 

Finally, we present the results of using fake data as the comparative sample in Table~\ref{table:comfakegan}.

\begin{table}[H]
  \caption{ComFakeGAN using input concatenation architecture results. We denote ComFakeGAN by FakeGAN.}
  \label{table:comfakegan}
  \begin{center}
  \resizebox{\textwidth}{!}{
  \begin{tabular}{lllllll}
    \toprule
    \textbf{Algorithms} & FID $\downarrow$ & IS $\uparrow$ & Precision $\uparrow$ & Recall $\uparrow$ & Density $\uparrow$ & Coverage $\uparrow$ \\
    \toprule (CIFAR10) \\
    SFakeGAN & 45.55($\pm$4.53) & 6.57($\pm$0.33) & 0.63($\pm$0.03) & 0.33($\pm$0.03) & 0.51($\pm$0.05) & 0.39($\pm$0.05)\\
    LSFakeGAN & 35.45($\pm$1.98) & 7.09($\pm$0.15) & 0.64($\pm$0.01) & 0.48($\pm$0.02) & 0.54($\pm$0.01) & 0.49($\pm$0.04) \\
    HingeFakeGAN & 36.24($\pm$1.72) & 7.42($\pm$0.07) & 0.63($\pm$0.02) & 0.36($\pm$0.03) & 0.56($\pm$0.05) & 0.48($\pm$0.01)\\
    WFakeGAN-GP  & 75.55($\pm$5.44) & 4.20($\pm$0.19) & 0.67($\pm$0.02) & 0.12($\pm$0.03) & 0.55($\pm$0.05) & 0.21($\pm$0.02)\\
    \midrule (Tiny-ImageNet) \\
    SFakeGAN & 80.43($\pm$5.57) & 6.97($\pm$0.40) & 0.44($\pm$0.05) & 0.13($\pm$0.02) & 0.27($\pm$0.04) & 0.20($\pm$0.02)\\
    LSFakeGAN & 78.63($\pm$5.58) & 6.86($\pm$0.53) & 0.49($\pm$0.06) & 0.10($\pm$0.02) & 0.33($\pm$0.07) & 0.21($\pm$0.04)\\
    HingeFakeGAN & 82.89($\pm$4.74) & 6.70($\pm$0.49) & 0.47($\pm$0.03) & 0.09($\pm$0.02) & 0.29($\pm$0.04) & 0.20($\pm$0.01)\\
    WFakeGAN-GP & 169.11($\pm$23.83) & 3.34($\pm$0.22) & 0.25($\pm$0.11) & 0.00($\pm$0.00) & 0.10($\pm$0.05) & 0.05($\pm$0.02)\\
    \bottomrule
  \end{tabular}}
  \end{center}
\end{table}

By comparison with Table~\ref{table:comgan}, we see that ComFakeGAN performs on par with ComGAN in all cases except for LSComFakeGAN in CIFAR10 which achieves significantly higher performance than LSComGAN. Meanwhile, in the case of ComSameGAN, we encountered training failures in most cases(see Table~\ref{table:cifar10} in Appendix). We hypothesize that the training signal available for the generator is limited in ComSameGAN since it is unable to utilize diverse information from different comparative samples.  

\subsection{Applying regularization}
We first display the results of applying equality regularization in SRGAN(standard-RGAN) and ComGAN using input concatenation architecture under $\lambda_{reg}=1.0$.

\begin{table}[H]
  \caption{ComGAN-eq results. We apply equality regularization of \eqref{eq:scomgan-eq} in SComGAN-eq and equality regularization of \eqref{eq:comgan-eq} in other algorithms. Input concatenation architecture was used except for RGAN and $\lambda_{reg}$ was set to 1.0. Values in brackets denote performance gain over ComGAN.}
  \label{table:comgan-eq}
  \begin{center}
  \resizebox{\textwidth}{!}{
  \begin{tabular}{lllllll}
    \toprule
    \textbf{Algorithms} & FID $\downarrow$ & IS $\uparrow$ & Precision $\uparrow$ & Recall $\uparrow$ & Density $\uparrow$ & Coverage $\uparrow$ \\
    \toprule (CIFAR10) \\
    SComGAN-eq & 45.71(\textcolor{green}{-0.85}) & 6.61(\textcolor{orange}{-0.21}) & 0.64(\textcolor{green}{+0.04}) & 0.42(\textcolor{green}{+0.10}) & 0.53(\textcolor{green}{+0.08}) & 0.38(+0.00)\\
    SRGAN-eq & 32.73(\textcolor{green}{-24.55}) & 7.34(\textcolor{green}{+1.02}) & 0.63(\textcolor{green}{+0.03}) & 0.54(\textcolor{green}{+0.29}) & 0.53(\textcolor{green}{+0.05}) & 0.48(\textcolor{green}{+0.18})\\
    LSComGAN-eq & 37.62(\textcolor{green}{-38.40}) & 6.87(\textcolor{green}{+1.73}) & 0.65(+0.00) & 0.46(\textcolor{green}{+0.25}) & 0.56(\textcolor{green}{+0.05}) & 0.46(\textcolor{green}{+0.20}) \\
    HingeComGAN-eq & 28.85(\textcolor{green}{-8.43}) & 7.57(\textcolor{green}{+0.12}) & 0.63(\textcolor{orange}{-0.02}) & 0.55(\textcolor{green}{+0.21}) & 0.54(\textcolor{orange}{-0.07}) & 0.53(\textcolor{green}{+0.04})\\
    WComGAN-GP-eq\tablefootnote{terminated at 32k step} & 56.71(\textcolor{green}{-22.48}) & 6.00(\textcolor{green}{+2.04}) & 0.64(\textcolor{orange}{-0.05}) & 0.36(\textcolor{green}{+0.27}) & 0.54(\textcolor{orange}{-0.05}) & 0.37(\textcolor{green}{+0.18})\\
    WComGAN-eq & 34.92(\textcolor{green}{-44.27}) & 7.14(\textcolor{green}{+3.18}) & 0.61(\textcolor{orange}{-0.08}) & 0.48(\textcolor{green}{+0.39}) & 0.50(\textcolor{orange}{-0.09}) & 0.47(\textcolor{green}{+0.28})\\
    \midrule (Tiny-ImageNet) \\
    SComGAN-eq & 77.74(\textcolor{orange}{+2.54}) & 7.05(\textcolor{orange}{-0.52}) & 0.49(\textcolor{green}{+0.06}) & 0.15(\textcolor{green}{+0.01}) & 0.33(\textcolor{green}{+0.07}) & 0.21(\textcolor{green}{+0.01})\\
    SRGAN-eq & 72.24(\textcolor{green}{-29.08}) & 6.78(\textcolor{green}{+0.40}) & 0.54(\textcolor{green}{+0.16}) & 0.18(\textcolor{green}{+0.14}) & 0.39(\textcolor{green}{+0.18}) & 0.23(\textcolor{green}{+0.09})\\
    LSComGAN-eq & 70.34(\textcolor{green}{-12.24}) & 7.47(\textcolor{green}{+0.54}) & 0.51(\textcolor{green}{+0.06}) & 0.20(\textcolor{green}{+0.09}) & 0.33(\textcolor{green}{+0.05}) & 0.24(\textcolor{green}{+0.05})\\
    HingeComGAN-eq & 78.70(\textcolor{green}{-6.74}) & 6.54(\textcolor{orange}{-0.08}) & 0.47(\textcolor{green}{+0.04}) & 0.12(\textcolor{green}{+0.04}) & 0.28(\textcolor{green}{+0.02}) & 0.20(\textcolor{green}{+0.02})\\
    WComGAN-GP-eq & 106.64(\textcolor{green}{-47.77}) & 5.04(\textcolor{green}{+1.55}) & 0.33(\textcolor{green}{+0.01}) & 0.05(\textcolor{green}{+0.04}) & 0.16(\textcolor{green}{+0.04}) & 0.11(\textcolor{green}{+0.05})\\
    WComGAN-eq & 105.58(\textcolor{green}{-48.83}) & 5.52(\textcolor{green}{+2.03}) & 0.35(\textcolor{green}{+0.03}) & 0.03(\textcolor{green}{+0.02}) & 0.18(\textcolor{green}{+0.06}) & 0.12(\textcolor{green}{+0.06})\\
    \bottomrule
  \end{tabular}}
  \end{center}
\end{table}

Table~\ref{table:comgan-eq} shows that ComGAN-eq outperforms ComGAN by a large margin in most cases demonstrating the effectiveness of equality regularization. We further show in Table~\ref{table:cifar10} and \ref{table:tinyimg} that ComGAN-eq surpasses RaGAN in all experiments except for LSRaGAN in CIFAR10 and WGAN-GP. Especially, WComGAN-eq outperforms WComGAN-GP-eq in CIFAR10, showing competitive performance with WGAN-GP, and performs on par with WComGAN-GP-eq in Tiny-ImageNet proving that equality regularization can replace gradient penalty. 

In the case of applying equality regularization in RaGAN, the results are given by Table~\ref{table:ragan-eq}.  

\begin{table}[H]
  \caption{RaGAN-eq results. We apply equality regularization of \eqref{eq:ragan-eq} under $\lambda_{reg}=1.0$ except for WGAN-GP-eq in CIFAR10 where we use $\lambda_{reg}=0.1$. Values in brackets denote performance gain over RaGAN.}
  \label{table:ragan-eq}
  \begin{center}
  \resizebox{\textwidth}{!}{
  \begin{tabular}{lllllll}
    \toprule
    \textbf{Algorithms} & FID $\downarrow$ & IS $\uparrow$ & Precision $\uparrow$ & Recall $\uparrow$ & Density $\uparrow$ & Coverage $\uparrow$ \\
    \toprule (CIFAR10) \\
    SRaGAN-eq & 32.76(\textcolor{green}{-26.23}) & 7.52(\textcolor{green}{+1.24}) & 0.62(\textcolor{green}{+0.04}) & 0.56(\textcolor{green}{+0.32}) & 0.52(\textcolor{green}{+0.10}) & 0.48(\textcolor{green}{+0.21})\\
    LSRaGAN-eq & 31.16(\textcolor{green}{-3.00}) & 7.64(\textcolor{green}{+0.09}) & 0.62(\textcolor{orange}{-0.01}) & 0.58(\textcolor{green}{+0.07}) & 0.55(\textcolor{green}{+0.01}) & 0.52(\textcolor{green}{+0.03})\\
    HingeRaGAN-eq & 25.06(\textcolor{green}{-21.25}) & 7.93(\textcolor{green}{+0.70}) & 0.66(\textcolor{green}{+0.05}) & 0.57(\textcolor{green}{+0.24}) & 0.64(\textcolor{green}{+0.13}) & 0.61(\textcolor{green}{+0.22})\\
    WGAN-GP-eq & 26.96(\textcolor{green}{-6.24}) & 7.76(\textcolor{green}{+0.69}) & 0.65(\textcolor{green}{+0.02}) & 0.60(\textcolor{green}{+0.04}) & 0.59(\textcolor{green}{+0.05}) & 0.57(\textcolor{green}{+0.08})\\
    WGAN-eq & 24.86(\textcolor{green}{-8.35}) & 7.90(\textcolor{green}{+0.83}) & 0.65(\textcolor{green}{+0.02}) & 0.58(\textcolor{green}{+0.02}) & 0.62(\textcolor{green}{+0.09}) & 0.62(\textcolor{green}{+0.13})\\
    \midrule (Tiny-ImageNet) \\ 
    SRaGAN-eq & 69.06(\textcolor{green}{-18.07}) & 7.17(\textcolor{green}{+0.48}) & 0.56(\textcolor{green}{+0.14}) & 0.20(\textcolor{green}{+0.13}) & 0.38(\textcolor{green}{+0.16}) & 0.25(\textcolor{green}{+0.08})\\
    LSRaGAN-eq & 74.29(\textcolor{orange}{+0.59}) & 6.87(\textcolor{orange}{-0.12}) & 0.52(+0.00) & 0.16(\textcolor{orange}{-0.01}) & 0.33(\textcolor{orange}{-0.03}) & 0.22(+0.00)\\
    HingeRaGAN-eq & 82.80(\textcolor{green}{-13.58}) & 6.10(\textcolor{green}{+0.05}) & 0.50(\textcolor{orange}{-0.02}) & 0.13(\textcolor{green}{+0.10}) & 0.30(\textcolor{orange}{-0.05}) & 0.20(\textcolor{green}{+0.03})\\
    WGAN-GP-eq\tablefootnote{terminated at 18k step} & 57.18(\textcolor{green}{-5.78}) & 8.23(\textcolor{green}{+0.67}) & 0.49(\textcolor{green}{+0.02}) & 0.38(\textcolor{green}{+0.09}) & 0.32(\textcolor{green}{+0.03}) & 0.29(\textcolor{green}{+0.04})\\
    WGAN-eq & 85.72(\textcolor{orange}{+22.77}) & 6.04(\textcolor{orange}{-1.52}) & 0.51(\textcolor{green}{+0.04}) & 0.11(\textcolor{orange}{-0.19}) & 0.31(\textcolor{green}{+0.02}) & 0.19(\textcolor{orange}{-0.06})\\
    \bottomrule
  \end{tabular}}
  \end{center}
\end{table}

Table~\ref{table:ragan-eq} demonstrates that applying equality regularization to RaGAN significantly improves the performance apart from LSRaGAN. This corresponds to the fact that LSRaGAN indirectly includes equality regularization in the objective. RaGAN-eq also outperforms ordinary GAN by a significant margin except for HingeGAN in Tiny-ImageNet(see Table~\ref{table:cifar10} and \ref{table:tinyimg}). Especially, the lowest FIDs in CIFAR10 and Tiny-ImageNet are respectively achieved by WGAN-eq and WGAN-GP-eq. WGAN-eq also outperforms WGAN-GP-eq in CIFAR10. We thus confirm that equality regularization renders it possible for ComGAN and RaGAN to be trained under the simplest loss of $f_1=-I$ and $f_2=I$. For the value of $\lambda_{reg}$, we observed that simply using 1.0 was sufficient in most experiments except for WGAN-GP-eq in CIFAR10 where $\lambda_{reg}=1.0$ led the discriminator to collapse to constant output due to strong regularization, which was addressed by setting $\lambda_{reg}=0.1$. Finally, in Figure~\ref{fig:cifar10-ragan}, we show the training curves of RaGAN and RaGAN-eq in CIFAR10, where we observe the stable performance of equality regularization, outperforming RaGAN throughout the whole training apart from LSRaGAN.  

\begin{figure}[H]
\centering
\setlength{\tabcolsep}{0mm}
\subfloat[SRaGAN]{
\begin{tabular}{c}
\includegraphics[width=0.24\textwidth]{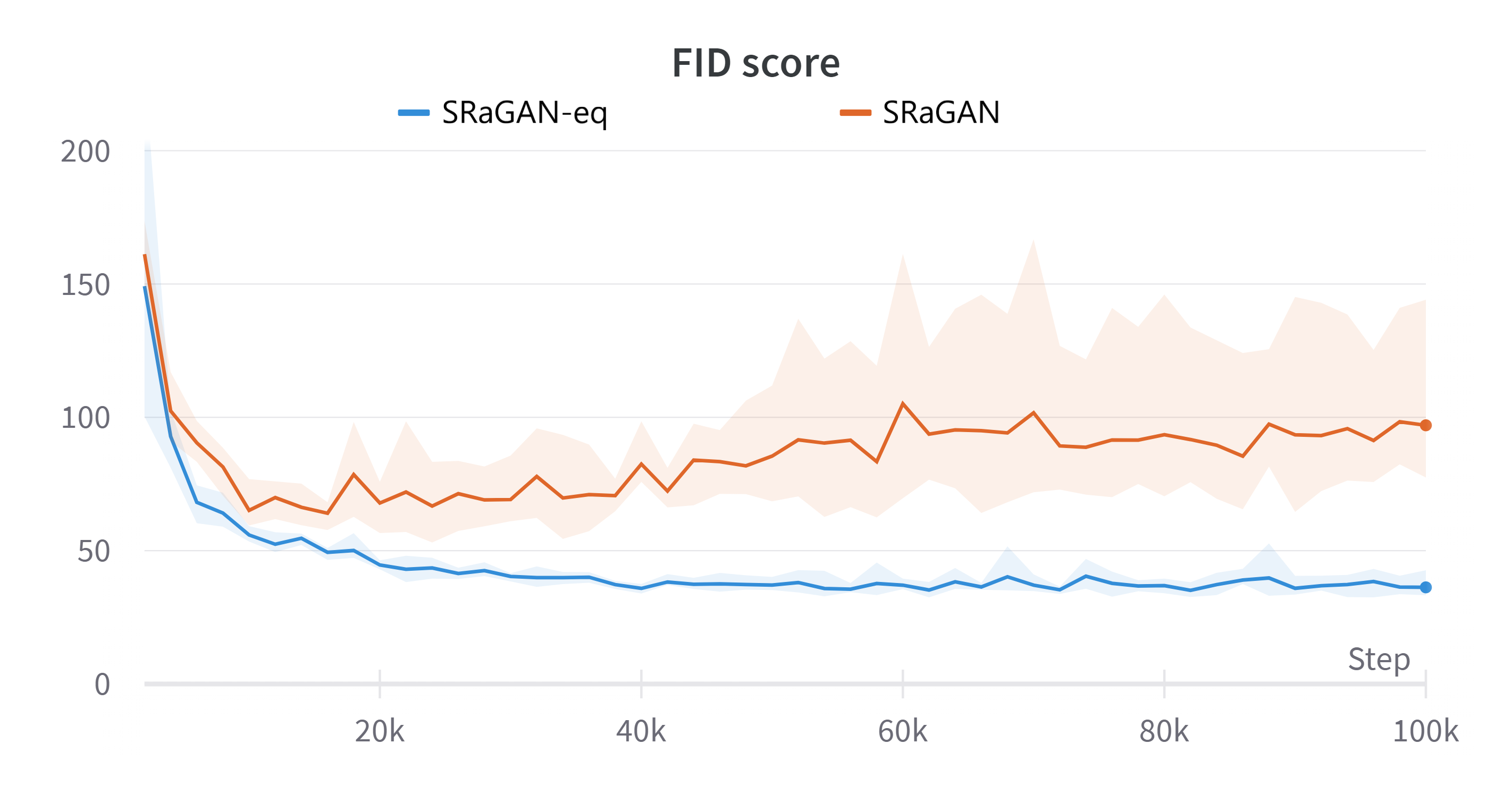} \\
\includegraphics[width=0.24\textwidth]{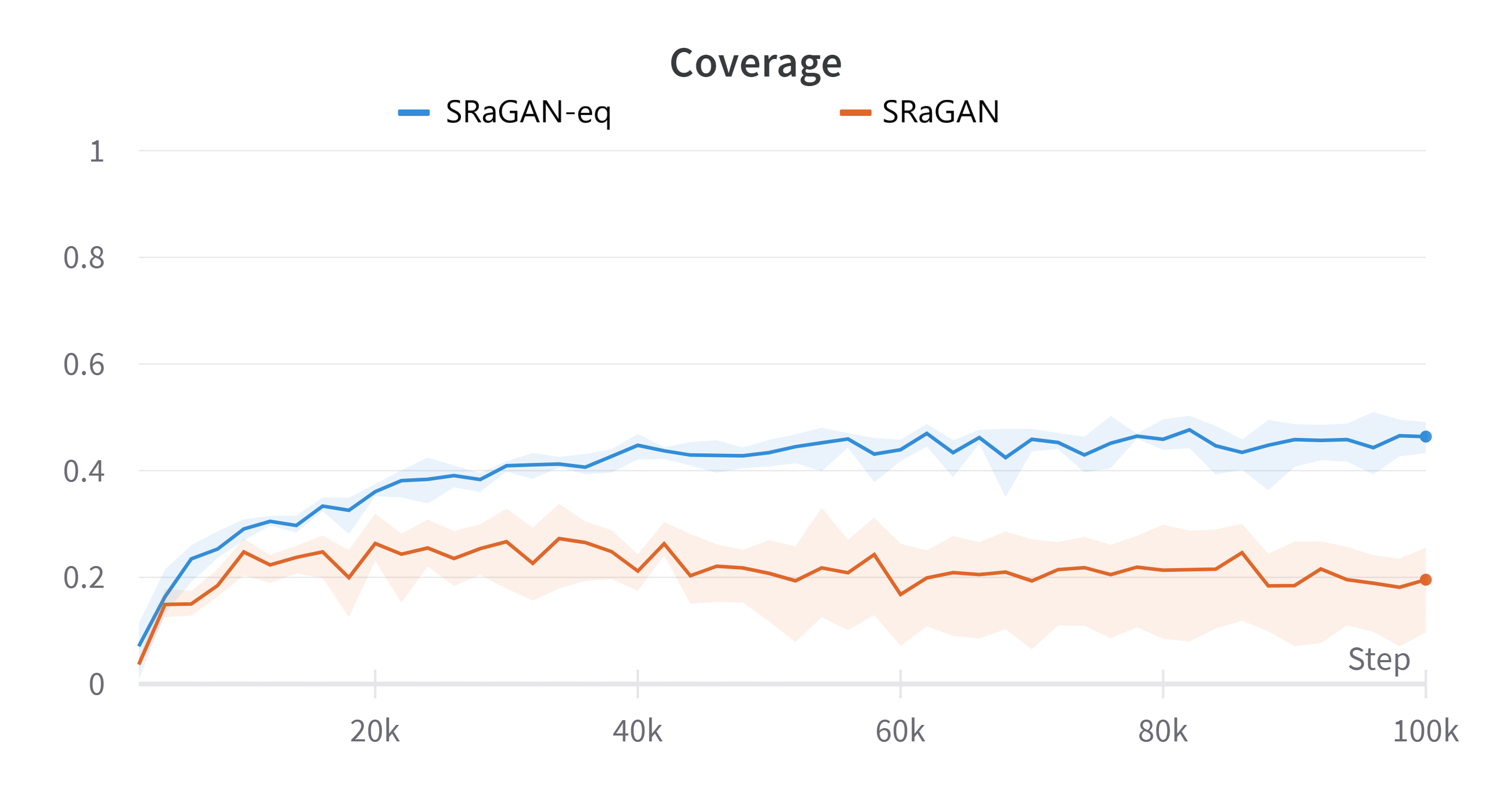}
\end{tabular}
}
\subfloat[LSRaGAN]{
\begin{tabular}{c}
\includegraphics[width=0.24\textwidth]{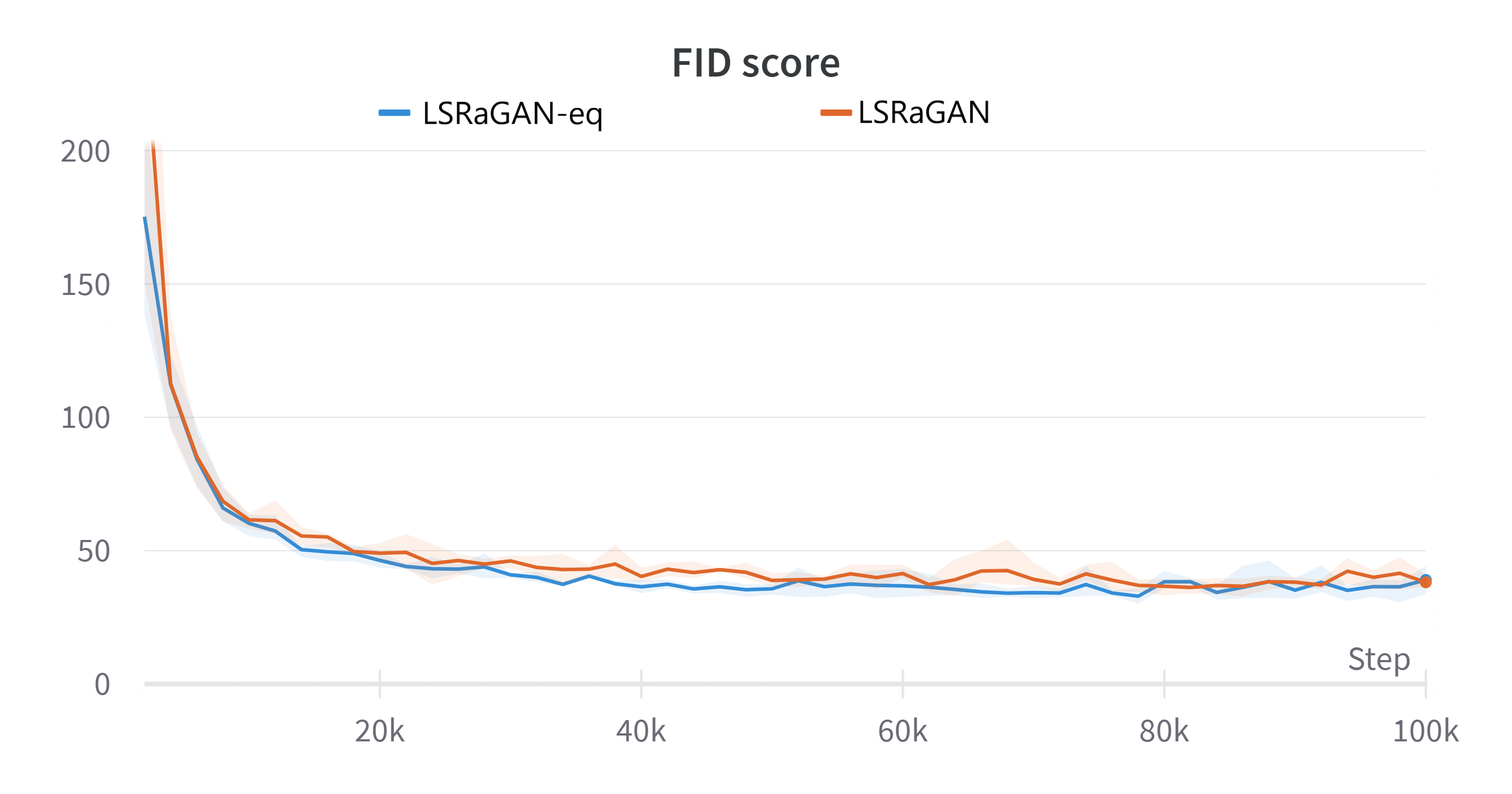} \\
\includegraphics[width=0.24\textwidth]{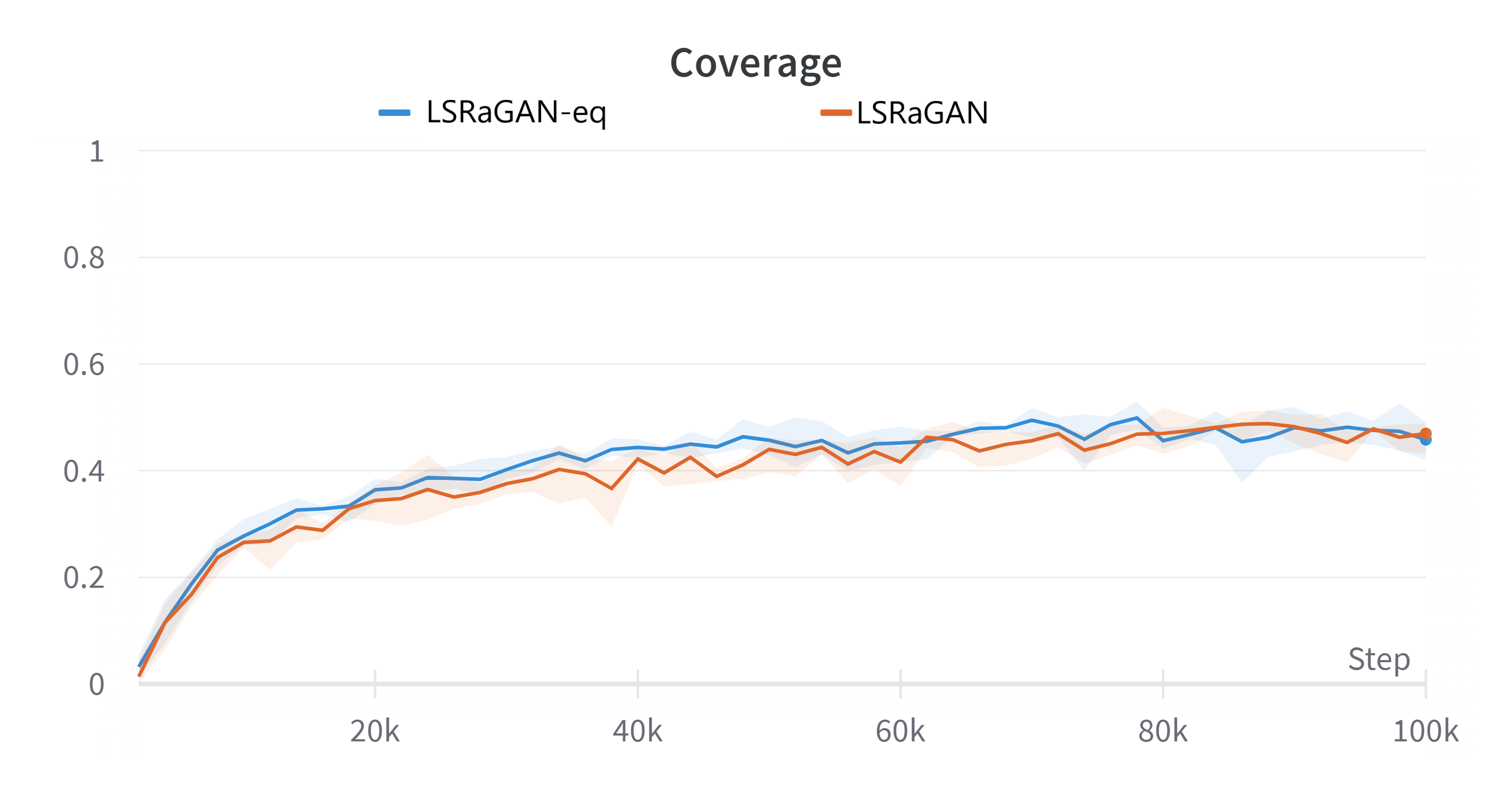}
\end{tabular}
}
\subfloat[HingeRaGAN]{
\begin{tabular}{c}
\includegraphics[width=0.24\textwidth]{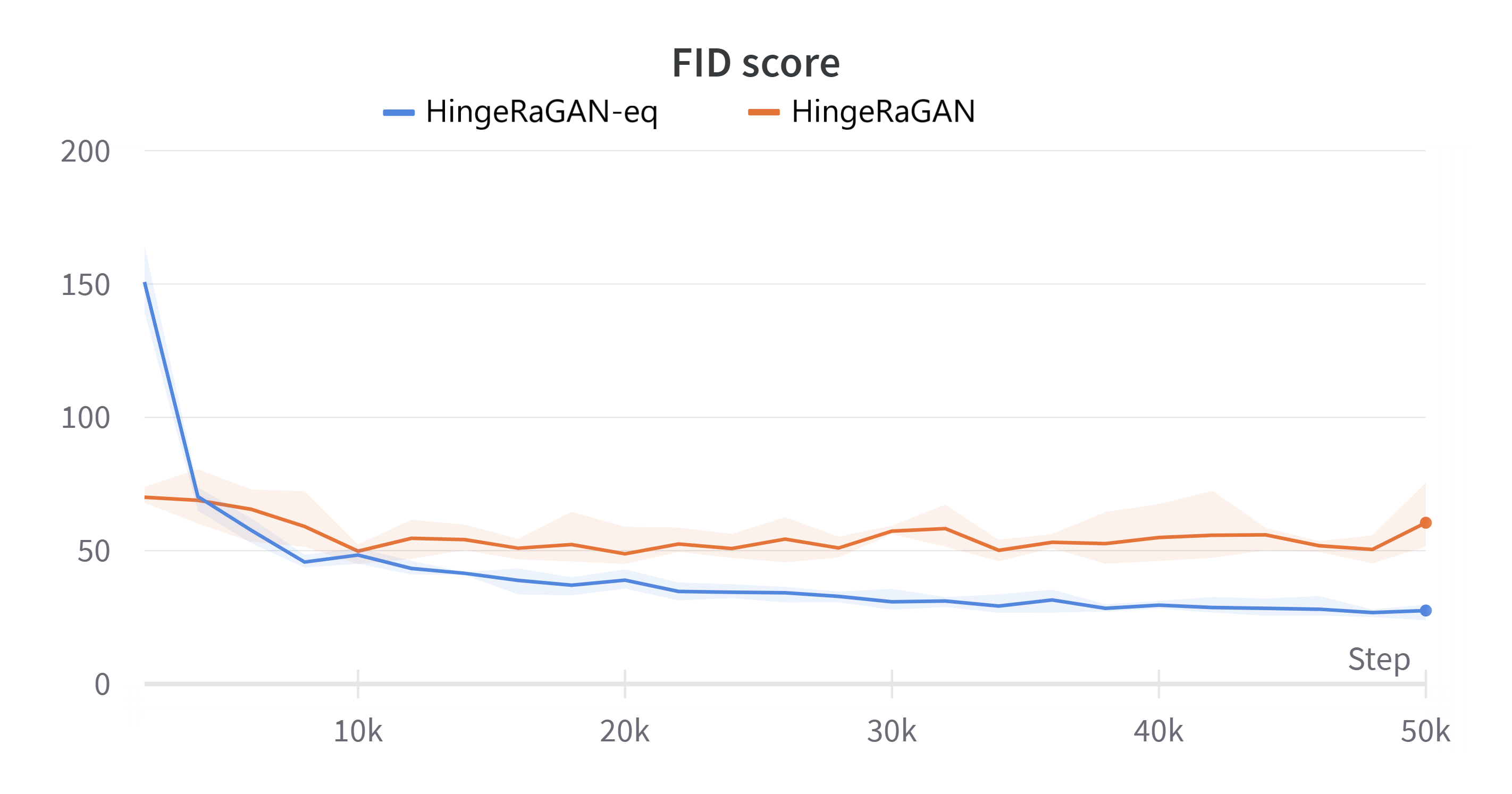} \\
\includegraphics[width=0.24\textwidth]{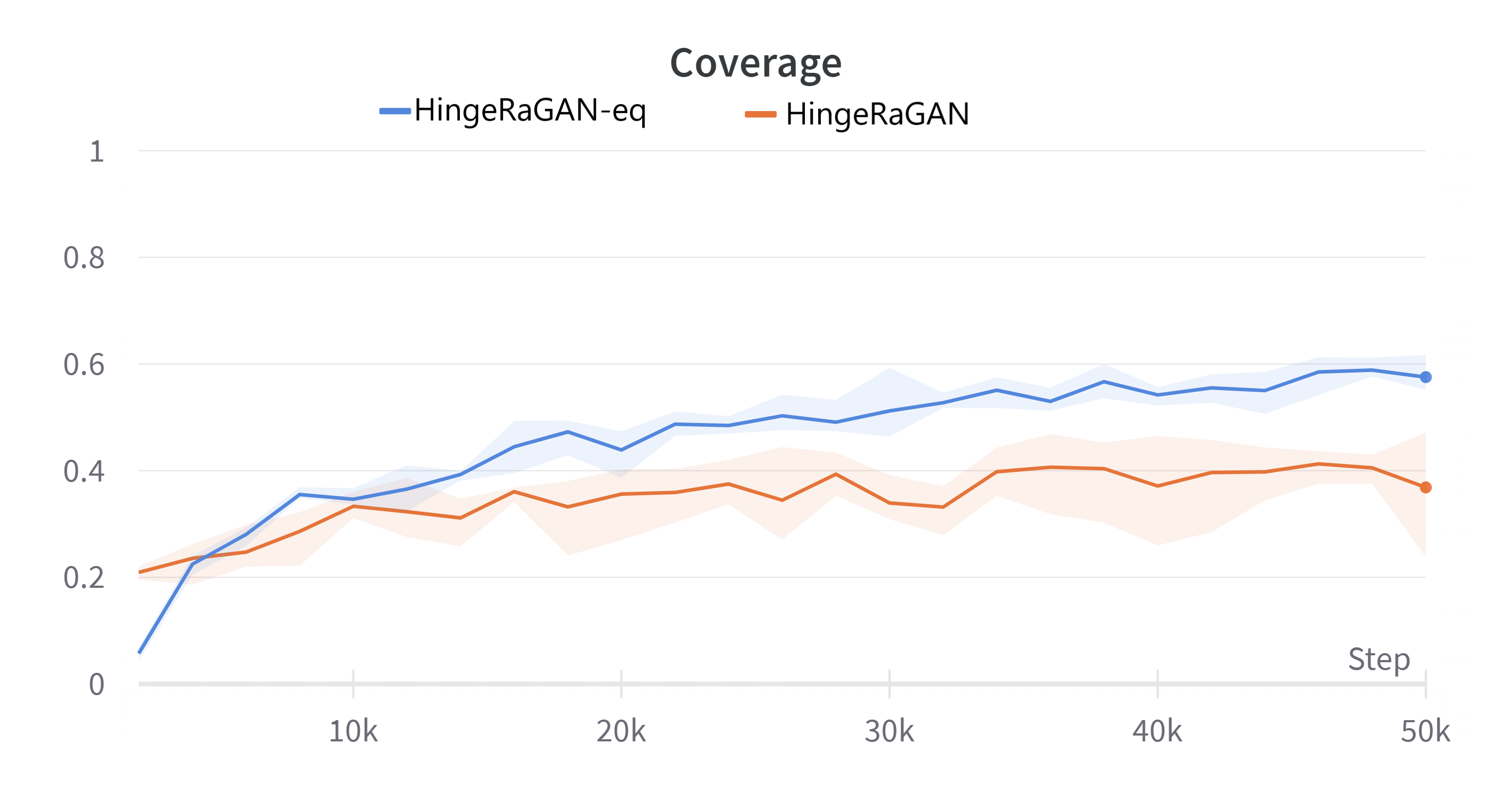}
\end{tabular}
}
\subfloat[WGAN-GP]{
\begin{tabular}{c}
\includegraphics[width=0.24\textwidth]{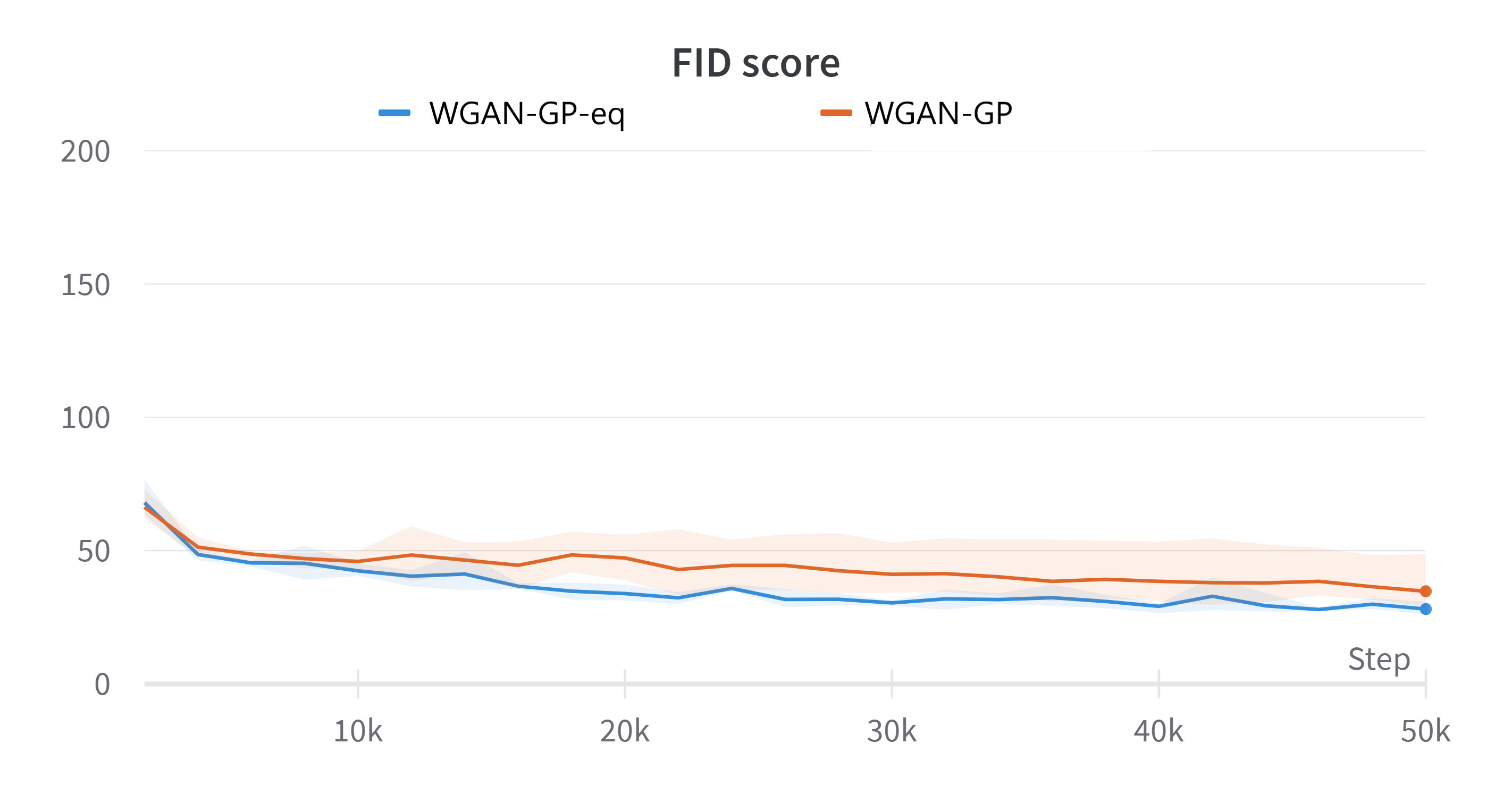} \\
\includegraphics[width=0.24\textwidth]{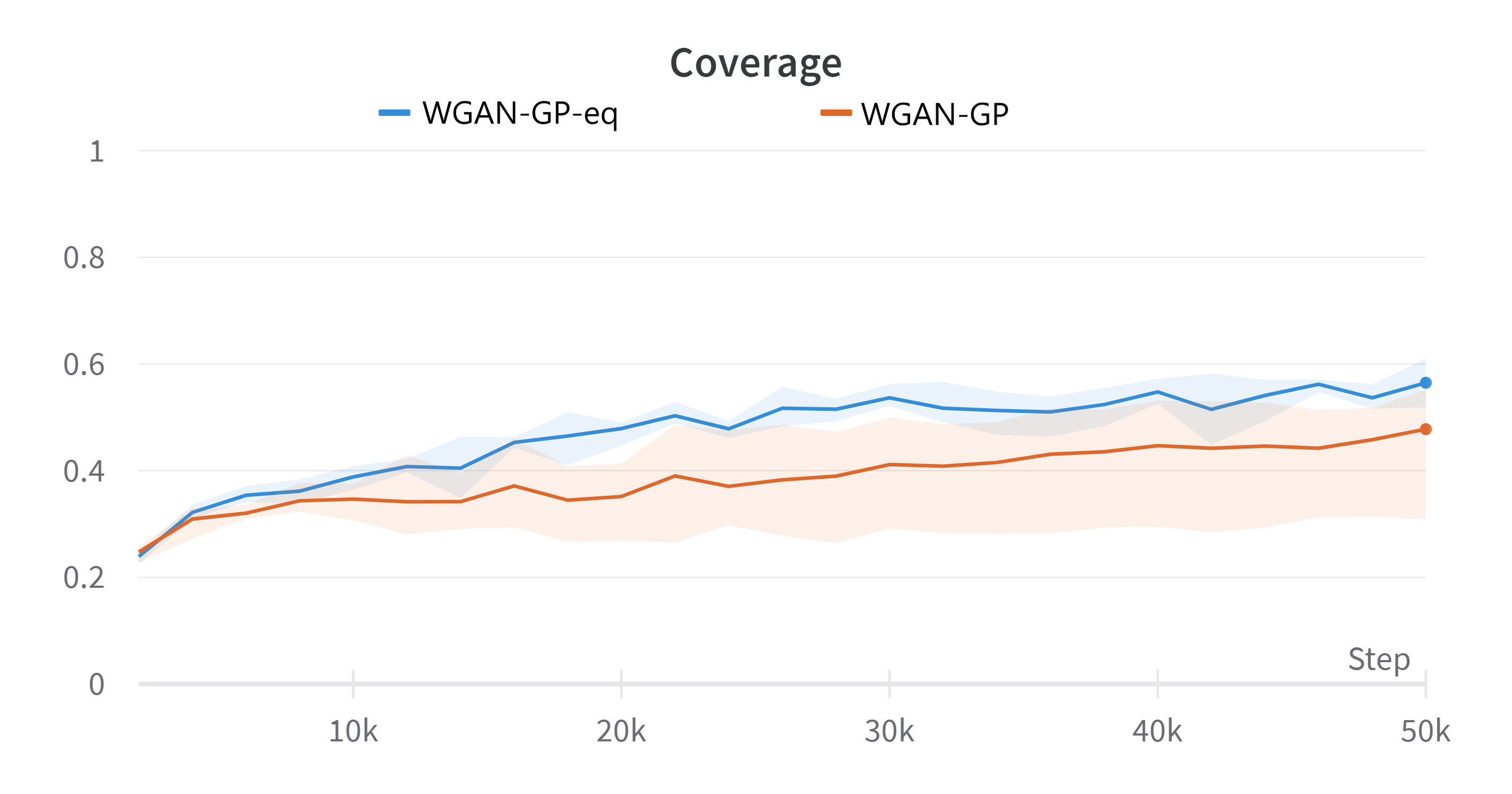}
\end{tabular}
}
\caption{FID and Coverage curves of RaGAN(orange) and RaGAN-eq(blue) in CIFAR10 dataset.}
\label{fig:cifar10-ragan}
\end{figure}

We also experimented on WGAN-rf which applies rf regularization in PacGAN under $C(x,y)=\phi(x)+\phi(y)$ and $f_1=-I$ and $f_2=I$ as shown in Table~\ref{table:wgan-rf}.

\begin{table}[H]
  \caption{WGAN-rf results. We apply equality regularization of (11) under $\lambda_{reg}=1.0$. (C) and (T) denote experiments in CIFAR10 and Tiny-ImageNet.}
  \label{table:wgan-rf}
  \begin{center}
  \resizebox{\textwidth}{!}{
  \begin{tabular}{lllllll}
    \toprule
    \textbf{Algorithms} & FID $\downarrow$ & IS $\uparrow$ & Precision $\uparrow$ & Recall $\uparrow$ & Density $\uparrow$ & Coverage $\uparrow$ \\
    \toprule
    WGAN-rf(C) & 46.46($\pm$2.25) & 6.54($\pm$0.20) & 0.61($\pm$0.02) & 0.29($\pm$0.01) & 0.48($\pm$0.03) & 0.35($\pm$0.03)\\
    WGAN-rf(T) & 106.27($\pm$4.45) & 5.65($\pm$0.20) & 0.41($\pm$0.10) & 0.03($\pm$0.01) & 0.24($\pm$0.10) & 0.13($\pm$0.02)\\
    \bottomrule
  \end{tabular}}
  \end{center}
\end{table}

From Table~\ref{table:wgan-rf}, we conclude that WGAN-rf is solely trainable without other regularization such as gradient penalty, attaining FID close to vanilla SGAN and LSGAN in CIFAR10 experiments yet the performance is still worse than WGAN-eq. We surmise that more experiments would be required to fairly compare rf-regularization and equality regularization. 

Meanwhile, since restricting the capacity of discriminators has resulted in better performance in most GANs\citep{miyato2018spectral, biggan, zhang2019consistency, lee2020regularization}, one might think that equality regularization performs well because of the same reason. To test such an argument, we apply equality regularization in ordinary GANs and present results in Table~\ref{table:gan-eq}. 

\begin{table}[H]
  \caption{GAN-eq results. We apply equality regularization of (8) under $\lambda_{reg}=1.0$. The results of CIFAR10 and Tiny-ImageNet are respectively shown in the upper rows and the lower rows. Values in brackets denote performance gain over ordinary GANs.}
  \label{table:gan-eq}
  \begin{center}
  \resizebox{\textwidth}{!}{
  \begin{tabular}{lllllll}
    \toprule
    \textbf{Algorithms} & FID $\downarrow$ & IS $\uparrow$ & Precision $\uparrow$ & Recall $\uparrow$ & Density $\uparrow$ & Coverage $\uparrow$ \\
    \toprule
    SGAN-eq & 34.74(\textcolor{green}{-15.92}) & 7.31(\textcolor{green}{+0.58}) & 0.63(\textcolor{green}{+0.03}) & 0.50(\textcolor{green}{+0.21}) & 0.53(\textcolor{green}{+0.07}) & 0.48(\textcolor{green}{+0.10})\\
    HingeGAN-eq & 54.09(\textcolor{orange}{+16.76}) & 5.73(\textcolor{orange}{-1.45}) & 0.60(\textcolor{orange}{-0.07}) & 0.34(\textcolor{green}{+0.01}) & 0.47(\textcolor{orange}{-0.17}) & 0.30(\textcolor{orange}{-0.20})\\
    \midrule
    SGAN-eq & 80.58(\textcolor{orange}{+3.64}) & 6.43(\textcolor{orange}{-0.56}) & 0.49(+0.00) & 0.14(\textcolor{green}{+0.02}) & 0.29(\textcolor{orange}{-0.04}) & 0.20(\textcolor{orange}{-0.01}) \\
    HingeGAN-eq & 95.55(\textcolor{orange}{+17.09}) & 5.95(\textcolor{orange}{-1.10}) & 0.43(\textcolor{orange}{-0.05}) & 0.05(\textcolor{orange}{-0.07}) & 0.26(\textcolor{orange}{-0.05}) & 0.15(\textcolor{orange}{-0.06}) \\
    \bottomrule
  \end{tabular}}
  \end{center}
\end{table}

Except for SGAN-eq in CIFAR10, we see that applying equality regularization in ordinary GANs actually deteriorates the performance, proving that the role of equality regularization is different from existing regularization. Such results also imply that equality regularization can promisingly be used with existing regularization(e.g. spectral normalization, r1 regularization).

\paragraph{Summary of experiments} In our experiments, RaGAN did not perform better than ordinary GANs except for LSRaGAN which indirectly possesses equality regularization. Also, applying input concatenation architecture in ComGAN surpassed RaGAN by significant margins in SGAN and HingeGAN albeit it showed worse performance in LSGAN and WGAN. For comparative samples, ComFakeGAN performed on par with ComGAN while ComSameGAN failed. With equality regularization applied, the performance of ComGAN using input-concatenation significantly improved, outperforming RaGAN in most cases. Applying equality regularization in RaGAN also improved the performances demonstrated by WGAN-eq and WGAN-GP-eq which respectively achieve the lowest FID in CIFAR10 and Tiny-ImageNet. For rf regularization, we confirmed that WGAN-rf can be trained without difficulty. Finally, applying equality regularization in ordinary GANs degraded the performances proving that the role of equality regularization is different from existing regularizations. We summarized our training results in Table~\ref{table:cifar10} and \ref{table:tinyimg} and observed RGAN-eq and RaGAN-eq variants show superb performance achieving the lowest FID among algorithms in CIFAR10 experiments and Tiny-ImageNet experiments of SGAN and WGAN. 

\section{Conclusion and Future Work}
In this work, we generalized relativistic GANs towards arbitrary architecture to enable semantic comparison. We showed training in equivalent samples is critical for performance improvements in relativistic GANs. The proposed methods, ComGAN and equality regularization outperformed relativistic GANs and ordinary GANs in various losses and evaluation metrics. Theoretically, we showed that ComGAN matches the divergences between swapped joint distributions between real and fake data, including RGAN. We also expressed optimal discriminators of proposed methods and showed connections to PacGAN. In future works, we plan to employ more advanced architecture(e.g. attentions) in ComGAN to fully exploit semantic comparisons. We also intend to analyze upon employing multiple comparative samples in ComGAN and explore rf regularization in PacGAN under various settings.

\medskip

\bibliographystyle{plainnat}
\bibliography{mybib}

\begin{thebibliography}{47}
\providecommand{\natexlab}[1]{#1}
\providecommand{\url}[1]{\texttt{#1}}
\expandafter\ifx\csname urlstyle\endcsname\relax
  \providecommand{\doi}[1]{doi: #1}\else
  \providecommand{\doi}{doi: \begingroup \urlstyle{rm}\Url}\fi

\bibitem[Arjovsky and Bottou(2017)]{principled}
Martin Arjovsky and L{\'e}on Bottou.
\newblock {Towards principled methods for training generative adversarial
  networks}.
\newblock \emph{arXiv preprint arXiv:1701.04862}, 2017.

\bibitem[Arjovsky et~al.(2017)Arjovsky, Chintala, and
  Bottou]{arjovsky2017wasserstein}
Martin Arjovsky, Soumith Chintala, and L{\'e}on Bottou.
\newblock {Wasserstein generative adversarial networks}.
\newblock In \emph{International Conference on Machine Learning}, pages
  214--223. PMLR, 2017.

\bibitem[Bellemare et~al.(2017)Bellemare, Danihelka, Dabney, Mohamed,
  Lakshminarayanan, Hoyer, and Munos]{cramergan}
Marc~G Bellemare, Ivo Danihelka, Will Dabney, Shakir Mohamed, Balaji
  Lakshminarayanan, Stephan Hoyer, and R{\'e}mi Munos.
\newblock {The cramer distance as a solution to biased wasserstein gradients}.
\newblock \emph{arXiv preprint arXiv:1705.10743}, 2017.

\bibitem[Brock et~al.(2018)Brock, Donahue, and Simonyan]{biggan}
Andrew Brock, Jeff Donahue, and Karen Simonyan.
\newblock {Large scale GAN training for high fidelity natural image synthesis}.
\newblock \emph{arXiv preprint arXiv:1809.11096}, 2018.

\bibitem[Choi et~al.(2020)Choi, Uh, Yoo, and Ha]{starganv2}
Yunjey Choi, Youngjung Uh, Jaejun Yoo, and Jung-Woo Ha.
\newblock {Stargan v2: Diverse image synthesis for multiple domains}.
\newblock In \emph{Proceedings of the IEEE/CVF conference on computer vision
  and pattern recognition}, pages 8188--8197, 2020.

\bibitem[Goodfellow et~al.(2014)Goodfellow, Pouget-Abadie, Mirza, Xu,
  Warde-Farley, Ozair, Courville, and Bengio]{gan2014}
Ian~J Goodfellow, Jean Pouget-Abadie, Mehdi Mirza, Bing Xu, David Warde-Farley,
  Sherjil Ozair, Aaron Courville, and Yoshua Bengio.
\newblock {Generative adversarial nets}.
\newblock In \emph{Proceedings of the 27th International Conference on Neural
  Information Processing Systems-Volume 2}, pages 2672--2680, 2014.

\bibitem[Gulrajani et~al.(2017)Gulrajani, Ahmed, Arjovsky, Dumoulin, and
  Courville]{wgan-gp}
Ishaan Gulrajani, Faruk Ahmed, Martin Arjovsky, Vincent Dumoulin, and Aaron
  Courville.
\newblock {Improved training of wasserstein GANs}.
\newblock In \emph{Proceedings of the 31st International Conference on Neural
  Information Processing Systems}, pages 5769--5779, 2017.

\bibitem[Heusel et~al.(2017)Heusel, Ramsauer, Unterthiner, Nessler, and
  Hochreiter]{fid}
Martin Heusel, Hubert Ramsauer, Thomas Unterthiner, Bernhard Nessler, and Sepp
  Hochreiter.
\newblock {GANs trained by a two time-scale update rule converge to a local
  nash equilibrium}.
\newblock In \emph{Proceedings of the 31st International Conference on Neural
  Information Processing Systems}, pages 6629--6640, 2017.

\bibitem[Jolicoeur-Martineau(2018)]{jolicoeur2018relativistic}
Alexia Jolicoeur-Martineau.
\newblock {The relativistic discriminator: a key element missing from standard
  GAN}.
\newblock \emph{arXiv preprint arXiv:1807.00734}, 2018.

\bibitem[Kang et~al.(2022)Kang, Shin, and Park]{kang2022studiogan}
Minguk Kang, Joonghyuk Shin, and Jaesik Park.
\newblock {Studiogan: A taxonomy and benchmark of GANs for image synthesis}.
\newblock \emph{arXiv preprint arXiv:2206.09479}, 2022.

\bibitem[Karras et~al.(2019)Karras, Laine, and Aila]{karras2019style}
Tero Karras, Samuli Laine, and Timo Aila.
\newblock {A style-based generator architecture for generative adversarial
  networks}.
\newblock In \emph{Proceedings of the IEEE/CVF conference on computer vision
  and pattern recognition}, pages 4401--4410, 2019.

\bibitem[Karras et~al.(2020)Karras, Laine, Aittala, Hellsten, Lehtinen, and
  Aila]{styleganv2}
Tero Karras, Samuli Laine, Miika Aittala, Janne Hellsten, Jaakko Lehtinen, and
  Timo Aila.
\newblock {Analyzing and improving the image quality of stylegan}.
\newblock In \emph{Proceedings of the IEEE/CVF conference on computer vision
  and pattern recognition}, pages 8110--8119, 2020.

\bibitem[Kingma and Ba(2014)]{kingma2014adam}
Diederik~P Kingma and Jimmy Ba.
\newblock {Adam: A method for stochastic optimization}.
\newblock \emph{arXiv preprint arXiv:1412.6980}, 2014.

\bibitem[Kodali et~al.(2017)Kodali, Abernethy, Hays, and Kira]{dragan}
Naveen Kodali, Jacob Abernethy, James Hays, and Zsolt Kira.
\newblock {On convergence and stability of GANs}.
\newblock \emph{arXiv preprint arXiv:1705.07215}, 2017.

\bibitem[Krizhevsky et~al.(2009)]{cifar10}
Alex Krizhevsky et~al.
\newblock {Learning multiple layers of features from tiny images}.
\newblock 2009.

\bibitem[Kupyn et~al.(2019)Kupyn, Martyniuk, Wu, and Wang]{kupyn2019deblurgan}
Orest Kupyn, Tetiana Martyniuk, Junru Wu, and Zhangyang Wang.
\newblock {Deblurgan-v2: Deblurring (orders-of-magnitude) faster and better}.
\newblock In \emph{Proceedings of the IEEE/CVF international conference on
  computer vision}, pages 8878--8887, 2019.

\bibitem[Kynk{\"a}{\"a}nniemi et~al.(2019)Kynk{\"a}{\"a}nniemi, Karras, Laine,
  Lehtinen, and Aila]{precisionrecall}
Tuomas Kynk{\"a}{\"a}nniemi, Tero Karras, Samuli Laine, Jaakko Lehtinen, and
  Timo Aila.
\newblock {Improved precision and recall metric for assessing generative
  models}.
\newblock In \emph{Proceedings of the 33rd International Conference on Neural
  Information Processing Systems}, pages 3927--3936, 2019.

\bibitem[Le and Yang(2015)]{tinyimg}
Ya~Le and Xuan Yang.
\newblock {Tiny imagenet visual recognition challenge}.
\newblock 2015.

\bibitem[Le~Cam(2012)]{le2012asymptotic}
Lucien Le~Cam.
\newblock \emph{{Asymptotic methods in statistical decision theory}}.
\newblock Springer Science \& Business Media, 2012.

\bibitem[Lee and Seok(2020)]{lee2020regularization}
Minhyeok Lee and Junhee Seok.
\newblock {Regularization methods for generative adversarial networks: An
  overview of recent studies}.
\newblock \emph{arXiv preprint arXiv:2005.09165}, 2020.

\bibitem[Lim and Ye(2017)]{lim2017geometric}
Jae~Hyun Lim and Jong~Chul Ye.
\newblock {Geometric GAN}.
\newblock \emph{arXiv preprint arXiv:1705.02894}, 2017.

\bibitem[Lin et~al.(2018)Lin, Khetan, Fanti, and Oh]{lin2018pacgan}
Zinan Lin, Ashish Khetan, Giulia Fanti, and Sewoong Oh.
\newblock {PacGAN: the power of two samples in generative adversarial
  networks}.
\newblock In \emph{Proceedings of the 32nd International Conference on Neural
  Information Processing Systems}, pages 1505--1514, 2018.

\bibitem[Liu et~al.(2017)Liu, Breuel, and Kautz]{unit}
Ming-Yu Liu, Thomas Breuel, and Jan Kautz.
\newblock {Unsupervised image-to-image translation networks}.
\newblock In \emph{Proceedings of the 31st International Conference on Neural
  Information Processing Systems}, pages 700--708, 2017.

\bibitem[Lucic et~al.(2018)Lucic, Kurach, Michalski, Bousquet, and
  Gelly]{ganscreatedequal}
Mario Lucic, Karol Kurach, Marcin Michalski, Olivier Bousquet, and Sylvain
  Gelly.
\newblock {Are GANs created equal? a large-scale study}.
\newblock In \emph{Proceedings of the 32nd International Conference on Neural
  Information Processing Systems}, pages 698--707, 2018.

\bibitem[Mao et~al.(2017)Mao, Li, Xie, Lau, Wang, and
  Paul~Smolley]{mao2017lsgan}
Xudong Mao, Qing Li, Haoran Xie, Raymond~YK Lau, Zhen Wang, and Stephen
  Paul~Smolley.
\newblock {Least squares generative adversarial networks}.
\newblock In \emph{Proceedings of the IEEE international conference on computer
  vision}, pages 2794--2802, 2017.

\bibitem[Mescheder et~al.(2018)Mescheder, Geiger, and Nowozin]{r1reg}
Lars Mescheder, Andreas Geiger, and Sebastian Nowozin.
\newblock {Which training methods for GANs do actually converge?}
\newblock In \emph{International conference on machine learning}, pages
  3481--3490. PMLR, 2018.

\bibitem[Micikevicius et~al.(2017)Micikevicius, Narang, Alben, Diamos, Elsen,
  Garcia, Ginsburg, Houston, Kuchaiev, Venkatesh,
  et~al.]{micikevicius2017mixedprecision}
Paulius Micikevicius, Sharan Narang, Jonah Alben, Gregory Diamos, Erich Elsen,
  David Garcia, Boris Ginsburg, Michael Houston, Oleksii Kuchaiev, Ganesh
  Venkatesh, et~al.
\newblock {Mixed precision training}.
\newblock \emph{arXiv preprint arXiv:1710.03740}, 2017.

\bibitem[Miyato et~al.(2018)Miyato, Kataoka, Koyama, and
  Yoshida]{miyato2018spectral}
Takeru Miyato, Toshiki Kataoka, Masanori Koyama, and Yuichi Yoshida.
\newblock {Spectral normalization for generative adversarial networks}.
\newblock \emph{arXiv preprint arXiv:1802.05957}, 2018.

\bibitem[Mroueh and Sercu(2017)]{fishergan}
Youssef Mroueh and Tom Sercu.
\newblock {Fisher GAN}.
\newblock In \emph{Proceedings of the 31st International Conference on Neural
  Information Processing Systems}, pages 2510--2520, 2017.

\bibitem[Mroueh et~al.(2017)Mroueh, Sercu, and Goel]{mcgan}
Youssef Mroueh, Tom Sercu, and Vaibhava Goel.
\newblock {Mcgan: Mean and covariance feature matching GAN}.
\newblock In \emph{International Conference on Machine Learning}, pages
  2527--2535. PMLR, 2017.

\bibitem[M{\"u}ller(1997)]{ipm}
Alfred M{\"u}ller.
\newblock {Integral probability metrics and their generating classes of
  functions}.
\newblock \emph{Advances in applied probability}, 29\penalty0 (2):\penalty0
  429--443, 1997.

\bibitem[Naeem et~al.(2020)Naeem, Oh, Uh, Choi, and Yoo]{dencvr}
Muhammad~Ferjad Naeem, Seong~Joon Oh, Youngjung Uh, Yunjey Choi, and Jaejun
  Yoo.
\newblock {Reliable fidelity and diversity metrics for generative models}.
\newblock In \emph{International Conference on Machine Learning}, pages
  7176--7185. PMLR, 2020.

\bibitem[Nowozin et~al.(2016)Nowozin, Cseke, and Tomioka]{fgan}
Sebastian Nowozin, Botond Cseke, and Ryota Tomioka.
\newblock {f-GAN: training generative neural samplers using variational
  divergence minimization}.
\newblock In \emph{Proceedings of the 30th International Conference on Neural
  Information Processing Systems}, pages 271--279, 2016.

\bibitem[Radford et~al.(2015)Radford, Metz, and Chintala]{dcgan}
Alec Radford, Luke Metz, and Soumith Chintala.
\newblock {Unsupervised representation learning with deep convolutional
  generative adversarial networks}.
\newblock \emph{arXiv preprint arXiv:1511.06434}, 2015.

\bibitem[Salimans et~al.(2016)Salimans, Goodfellow, Zaremba, Cheung, Radford,
  and Chen]{improvedtech}
Tim Salimans, Ian Goodfellow, Wojciech Zaremba, Vicki Cheung, Alec Radford, and
  Xi~Chen.
\newblock {Improved techniques for training GANs}.
\newblock In \emph{Proceedings of the 30th International Conference on Neural
  Information Processing Systems}, pages 2234--2242, 2016.

\bibitem[Salimans et~al.(2018)Salimans, Zhang, Radford, and Metaxas]{otgan}
Tim Salimans, Han Zhang, Alec Radford, and Dimitris Metaxas.
\newblock {Improving GANs using optimal transport}.
\newblock \emph{arXiv preprint arXiv:1803.05573}, 2018.

\bibitem[Shim et~al.(2020)Shim, Park, and Kweon]{deformconvsr}
Gyumin Shim, Jinsun Park, and In~So Kweon.
\newblock {Robust reference-based super-resolution with similarity-aware
  deformable convolution}.
\newblock In \emph{Proceedings of the IEEE/CVF conference on computer vision
  and pattern recognition}, pages 8425--8434, 2020.

\bibitem[Tseng et~al.(2021)Tseng, Jiang, Liu, Yang, and Yang]{lecam}
Hung-Yu Tseng, Lu~Jiang, Ce~Liu, Ming-Hsuan Yang, and Weilong Yang.
\newblock {Regularizing generative adversarial networks under limited data}.
\newblock In \emph{Proceedings of the IEEE/CVF Conference on Computer Vision
  and Pattern Recognition}, pages 7921--7931, 2021.

\bibitem[Wang et~al.(2018)Wang, Yu, Wu, Gu, Liu, Dong, Qiao, and
  Change~Loy]{wang2018esrgan}
Xintao Wang, Ke~Yu, Shixiang Wu, Jinjin Gu, Yihao Liu, Chao Dong, Yu~Qiao, and
  Chen Change~Loy.
\newblock {Esrgan: Enhanced super-resolution generative adversarial networks}.
\newblock In \emph{Proceedings of the European conference on computer vision
  (ECCV) workshops}, 2018.

\bibitem[Yu et~al.(2021)Yu, Liu, Dundar, Tao, Catanzaro, Davis, and
  Fritz]{dualcontrastive}
Ning Yu, Guilin Liu, Aysegul Dundar, Andrew Tao, Bryan Catanzaro, Larry~S
  Davis, and Mario Fritz.
\newblock {Dual contrastive loss and attention for GANs}.
\newblock In \emph{Proceedings of the IEEE/CVF International Conference on
  Computer Vision}, pages 6731--6742, 2021.

\bibitem[Yun et~al.(2019)Yun, Han, Oh, Chun, Choe, and Yoo]{yun2019cutmix}
Sangdoo Yun, Dongyoon Han, Seong~Joon Oh, Sanghyuk Chun, Junsuk Choe, and
  Youngjoon Yoo.
\newblock {Cutmix: Regularization strategy to train strong classifiers with
  localizable features}.
\newblock In \emph{Proceedings of the IEEE/CVF international conference on
  computer vision}, pages 6023--6032, 2019.

\bibitem[Zhang et~al.(2019{\natexlab{a}})Zhang, Zhang, Odena, and
  Lee]{zhang2019consistency}
Han Zhang, Zizhao Zhang, Augustus Odena, and Honglak Lee.
\newblock {Consistency regularization for generative adversarial networks}.
\newblock \emph{arXiv preprint arXiv:1910.12027}, 2019{\natexlab{a}}.

\bibitem[Zhang et~al.(2020)Zhang, Gool, and Timofte]{deepunfolding}
Kai Zhang, Luc~Van Gool, and Radu Timofte.
\newblock {Deep unfolding network for image super-resolution}.
\newblock In \emph{Proceedings of the IEEE/CVF conference on computer vision
  and pattern recognition}, pages 3217--3226, 2020.

\bibitem[Zhang et~al.(2019{\natexlab{b}})Zhang, Wang, Lin, and Qi]{srntt}
Zhifei Zhang, Zhaowen Wang, Zhe Lin, and Hairong Qi.
\newblock {Image super-resolution by neural texture transfer}.
\newblock In \emph{Proceedings of the IEEE/CVF conference on computer vision
  and pattern recognition}, pages 7982--7991, 2019{\natexlab{b}}.

\bibitem[Zhao et~al.(2021)Zhao, Singh, Lee, Zhang, Odena, and
  Zhang]{consistencyv2}
Zhengli Zhao, Sameer Singh, Honglak Lee, Zizhao Zhang, Augustus Odena, and Han
  Zhang.
\newblock {Improved consistency regularization for GANs}.
\newblock In \emph{Proceedings of the AAAI conference on artificial
  intelligence}, volume~35, pages 11033--11041, 2021.

\bibitem[Zheng et~al.(2018)Zheng, Ji, Wang, Liu, and Fang]{crossnet}
Haitian Zheng, Mengqi Ji, Haoqian Wang, Yebin Liu, and Lu~Fang.
\newblock {Crossnet: An end-to-end reference-based super-resolution network
  using cross-scale warping}.
\newblock In \emph{Proceedings of the European conference on computer vision
  (ECCV)}, pages 88--104, 2018.

\bibitem[Zhou et~al.(2020)Zhou, Ge, Xu, Wei, and Zhou]{selfadversarial}
Wangchunshu Zhou, Tao Ge, Ke~Xu, Furu Wei, and Ming Zhou.
\newblock {Self-adversarial learning with comparative discrimination for text
  generation}.
\newblock \emph{arXiv preprint arXiv:2001.11691}, 2020.

\end{thebibliography}

\newpage 

\appendix

\section{Proofs}
\subsection{Proof of Theorem 1}
Let us express the optimal discriminator of SComGAN by $D(x,y)=\sigma(C^*(x)-C^*(y))$ where $C^*(x)=\log\frac{p_d(x)}{p_g(x)}$. Observe that 
\begin{equation}
\label{eq:kl}
    E_{z \sim{p_z}}[-C^*(G_\theta(z))]=KL(p_g \parallel p_d).
\end{equation}
Given $D^*(x,y)=\sigma(C^*(x)-C^*(y))$ where $C^*(x)=\log\frac{p_d(x)}{p_g(x)}$,
\begin{align}
\intertext{(i)}
&\nabla_\theta L^{sat}_{SComFakeGAN}=2E_{x \sim{p_g}, z \sim{p_z}}[\nabla_\theta \log\sigma(C^*(x)-C^*(G_\theta(z)))]\\
&= -2E_{x \sim{p_g}, z \sim{p_z}}[\sigma(C^*(G_\theta(z))-C^*(x))\nabla_\theta C^*(G_\theta(z)))] \\
&= -2E_{x \sim{p_g}, z \sim{p_z}}[\nabla_\theta C^*(G_\theta(z)))-\sigma(C^*(x)-C^*(G_\theta(z)))\nabla_\theta C^*(G_\theta(z)))] \\
\intertext{and}
&\nabla_\theta L^{nonsat}_{SComFakeGAN}=-2E_{x \sim{p_g}, z \sim{p_z}}[\nabla_\theta \log\sigma(C^*(G_\theta(z))-C^*(x))]\\
&=-2E_{x \sim{p_g}, z \sim{p_z}}[\sigma(C^*(x)-C^*(G_\theta(z)))\nabla_\theta C^*(G_\theta(z))].\\
\therefore \quad & \nabla_\theta L^{sat}_{SComFakeGAN} + \nabla_\theta L^{nonsat}_{SComFakeGAN}=-2E_{z\sim{p_z}}[\nabla_\theta C^*(G_\theta(z))]\\
&=2\nabla_\theta KL(p_g \parallel p_d) \; \because \; \eqref{eq:kl}  \pushQED{\qed} \qedhere \popQED
\intertext{(ii)}
&\nabla_\theta L^{sat}_{SComSameGAN}=2E_{z \sim{p_z}}[\nabla_\theta \log\sigma(C^*(\overline{G_\theta(z)})-C^*(G_\theta(z)))]\\
&=-2E_{z \sim{p_z}}[\sigma(C^*(G_\theta(z))-C^*(G_\theta(z)))\nabla_\theta C^*(G_\theta(z)))]\\
&=\nabla_\theta KL(p_g \parallel p_d) \pushQED{\qed} \qedhere \popQED \\
&\nabla_\theta L^{nonsat}_{SComSameGAN}=-2E_{z \sim{p_z}}[\nabla_\theta \log\sigma(C^*(G_\theta(z))-C^*(\overline{G_\theta(z)}))]\\
&=-2E_{z \sim{p_z}}[\sigma(C^*(G_\theta(z))-C^*(G_\theta(z)))\nabla_\theta C^*(G_\theta(z))]\\
&=\nabla_\theta KL(p_g \parallel p_d)  \pushQED{\qed} \qedhere \popQED
\intertext{where $\overline{x}$ indicates $x$ is constant not having gradients.} \notag
\end{align}
 
\subsection{Proof of Proposition 2}
Equation \eqref{eq:scomgan-eq} is equal to
\begin{equation} \begin{split}
-\iint \{p_{d,g}(x,y)+0.5p_{d,d}(x,y)+0.5p_{g,g}(x,y)\}&\log D(x,y)+\\\{p_{g,d}(x,y)+0.5p_{d,d}(x,y)+0.5p_{g,g}(x,y)\}&\log(1-D(x,y))dxdy.
\end{split} \end{equation}
By minimizing the above equation, we obtain the following optimal discriminator:
\begin{align}
D^*(x,y)&=\frac{p_{d,g}(x,y)+0.5p_{d,d}(x,y)+0.5p_{g,g}(x,y)}{p_{d,d}(x,y)+p_{d,g}(x,y)+p_{g,d}(x,y)+p_{g,g}(x,y)}\\
&=\frac{p_{d,g}(x,y)+0.5p_{d,d}(x,y)+0.5p_{g,g}(x,y)}{(p_d(x)+p_g(x))(p_d(y)+p_g(y))}.\\
\intertext{By expressing $D^*(x,y)$ with respect to $D^*(x)=\frac{p_d(x)}{p_d(x)+p_g(x)}$, it is equal to}
&=D^*(x)(1-D^*(y))+0.5D^*(x)D^*(y)+0.5(1-D^*(x))(1-D^*(y)) \\
&=0.5+0.5(D^*(x)-D^*(y)). \pushQED{\qed} \qedhere \popQED
\end{align}
Meanwhile, the objective of LSComGAN-eq in \eqref{eq:comgan-eq} is
\begin{equation} \begin{split}
\iint &p_{d,g}(x,y)(D(x,y)-1)^2+p_{g,d}(x,y)(D(x,y)+1)^2+\\&p_{d,d}(x,y)D(x,y)^2+p_{g,g}D(x,y)^2dxdy.
\end{split} \end{equation}
where $D^*(x,y)$ is given by
\begin{align} D^*(x,y)&=\frac{p_{d,g}(x,y)-p_{g,d}(x,y)}{(p_d(x)+p_g(x))(p_d(y)+p_g(y))}.\\
\intertext{By expressing $D^*(x,y)$ with respect to LSGAN optimal discriminator
$D^*(x)=\frac{p_d(x)-p_g(x)}{p_d(x)+p_g(x)}$, it is equal to}
&=0.5(D^*(x)-D^*(y)). \pushQED{\qed} \qedhere \popQED
\end{align}

\subsection{Proof of Proposition 3}
\begin{align}
\intertext{In SPacGAN-n, the optimal discriminator is given by }
&D^*(x_1 \dots x_n)=\frac{p^n_d(x_1 \dots x_n)}{p^n_d(x_1 \dots x_n)+p^n_g(x_1 \dots x_n)}\\&=\sigma \left( \log\frac{p^n_d(x_1 \dots x_n)}{p^n_g(x_1 \dots x_n)} \right)=\sigma\left(\sum_{i=1}^{n} \log\frac{p_d(x_i)}{p_g(x_i)}\right).
\intertext{Since $C^*(x)=\log\frac{p_d(x)}{p_g(x)}$, we conclude the proof. \pushQED{\qed} \qedhere \popQED}
\notag
\end{align}

\subsection{Proof of Proposition 4}
By minimizing the equation \eqref{eq:spacgan-rf} expressed by
\begin{equation} \begin{split}
-\iint \{p_{d,d}(x,y)+0.5p_{d,g}(x,y)+0.5p_{g,d}(x,y)\}&\log D(x,y)+\\\{p_{g,g}(x,y)+0.5p_{d,g}(x,y)+0.5p_{g,d}(x,y)\}&\log(1-D(x,y))dxdy,
\end{split} \end{equation}
we obtain the optimal discriminator as
\begin{align}
D^*(x,y)&=\frac{p_{d,d}(x,y)+0.5p_{d,g}(x,y)+0.5p_{g,d}(x,y)}{p_{d,d}(x,y)+p_{d,g}(x,y)+p_{g,d}(x,y)+p_{g,g}(x,y)}\\
&=\frac{p_{d,d}(x,y)+0.5p_{d,g}(x,y)+0.5p_{g,d}(x,y)}{(p_d(x)+p_g(x))(p_d(y)+p_g(y))}.\\
\intertext{By expressing $D^*(x,y)$ with respect to $D^*(x)=\frac{p_d(x)}{p_d(x)+p_g(x)}$, it is equal to}
&=D^*(x)D^*(y)+0.5D^*(x)(1-D^*(y))+0.5(1-D^*(x))D^*(y) \\
&=0.5(D^*(x)+D^*(y)). \pushQED{\qed} \qedhere \popQED
\end{align}
Meanwhile, the objective of LSPacGAN-rf in \eqref{eq:pacgan-rf} is
\begin{equation} \begin{split}
\iint &p_{d,d}(x,y)(D(x,y)-1)^2+p_{g,g}(x,y)(D(x,y)+1)^2+\\&p_{d,g}(x,y)D(x,y)^2+p_{g,d}D(x,y)^2dxdy,
\end{split} \end{equation}
where $D^*(x,y)$ is given by
\begin{align}
D^*(x,y)&=\frac{p_{d,d}(x,y)-p_{g,g}(x,y)}{(p_d(x)+p_g(x))(p_d(y)+p_g(y))}. \\
\intertext{By expressing $D^*(x,y)$ with respect to LSGAN optimal discriminator
$D^*(x)=\frac{p_d(x)-p_g(x)}{p_d(x)+p_g(x)}$, it is equal to}
&=0.5(D^*(x)+D^*(y)). \pushQED{\qed} \qedhere \popQED
\end{align}

\section{Connection to LeCam divergence}
Let us define the discriminator objective as follows:
\begin{equation}
\begin{split}
\label{eq:wlecam-eq}
L_D&=E_{x\sim{p_g}}[\phi(x)]-E_{x\sim{p_d}}[\phi(x)]+\lambda E_{x\sim{p_d}}[\|\phi(x)-E_{x\sim{p_d}}[\overline{\phi(x)}]\|^2]\\&+\lambda E_{x\sim{p_g}}[\|\phi(x)-E_{z\sim{p_z}}[\overline{\phi(G(z))}]\|^2].
\end{split}
\end{equation}
\eqref{eq:wlecam-eq} is similar to WGAN-eq except that it does not train the discriminator with respect to $E_{x\sim{p_d}}[\phi(x)]$ and $E_{z\sim{p_z}}[\phi(G(z))]$. From now on, we let $\alpha_r=E_{x\sim{p_d}}[\overline{\phi(x)}]$ and $\alpha_f=E_{z\sim{p_z}}[\overline{\phi(G(z))}]$. 
\begin{proposition}
In case that $\alpha_r=-\alpha_f$, the generator objective defined by $L_G=E_{x\sim{p_d}}[\phi^*(x)]-E_{x\sim{p_g}}[\phi^*(x)]$ is equal to $(\frac{1}{2\lambda}+\alpha_r)\Delta(p_d \parallel p_g)$ under the optimal discriminator $\phi^*(x)$ of \eqref{eq:wlecam-eq}, where $\Delta(P \parallel Q)$ is LeCam divergence\citep{le2012asymptotic} given by 
\[ \int\frac{(P(x)-Q(x))^2}{P(x)+Q(x)}dx.\]
\end{proposition}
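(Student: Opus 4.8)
The plan is to mimic the pointwise-optimization argument used for the optimal discriminators elsewhere in the paper (Propositions~2 and~4). Since the regularization terms in \eqref{eq:wlecam-eq} use $\overline{\phi(x)}$, the scalars $\alpha_r$ and $\alpha_f$ are held fixed when solving for $\phi^*$, so $L_D$ is a single integral $\int \ell(x,\phi(x))\,dx$ with
\[ \ell(x,t)=(p_g(x)-p_d(x))\,t+\lambda p_d(x)(t-\alpha_r)^2+\lambda p_g(x)(t-\alpha_f)^2. \]
Because the integrand at each point depends only on the value $\phi(x)$ and the class of admissible $\phi$ is unconstrained, minimizing $L_D$ reduces to minimizing $\ell(x,\cdot)$ separately for almost every $x$ — the same reasoning used in the original GAN analysis and in the proofs of Propositions~2 and~4.

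First I would observe that $\ell(x,\cdot)$ is a convex quadratic with leading coefficient $\lambda(p_d(x)+p_g(x))>0$ on the union of the two supports, hence has a unique minimizer obtained from $\partial_t\ell(x,t)=0$. Solving
\[ -p_d(x)+p_g(x)+2\lambda p_d(x)(t-\alpha_r)+2\lambda p_g(x)(t-\alpha_f)=0 \]
yields
\[ \phi^*(x)=\frac{p_d(x)-p_g(x)}{2\lambda\,(p_d(x)+p_g(x))}+\frac{\alpha_r p_d(x)+\alpha_f p_g(x)}{p_d(x)+p_g(x)}; \]
on the null set $\{p_d+p_g=0\}$ the value of $\phi^*$ is immaterial. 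Next I would impose the hypothesis $\alpha_r=-\alpha_f$: substituting $\alpha_f=-\alpha_r$ turns the second summand into $\alpha_r\,(p_d(x)-p_g(x))/(p_d(x)+p_g(x))$, so both terms carry the common factor $(p_d(x)-p_g(x))/(p_d(x)+p_g(x))$ and collapse to
\[ \phi^*(x)=\Bigl(\frac{1}{2\lambda}+\alpha_r\Bigr)\frac{p_d(x)-p_g(x)}{p_d(x)+p_g(x)}. \]

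Finally I would plug this into the generator objective. Writing it as an integral against $p_d-p_g$,
\[ L_G=E_{x\sim p_d}[\phi^*(x)]-E_{x\sim p_g}[\phi^*(x)]=\int\bigl(p_d(x)-p_g(x)\bigr)\,\phi^*(x)\,dx=\Bigl(\frac{1}{2\lambda}+\alpha_r\Bigr)\int\frac{(p_d(x)-p_g(x))^2}{p_d(x)+p_g(x)}\,dx, \]
which is exactly $\bigl(\frac{1}{2\lambda}+\alpha_r\bigr)\Delta(p_d\parallel p_g)$ by the stated definition of LeCam divergence.

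There is no deep obstacle: the computation is elementary and the only care needed is the routine justification that pointwise minimization of a separable integrand over an unconstrained function class gives the global minimizer, together with the measure-theoretic remark that $\{p_d+p_g=0\}$ is negligible. One conceptual caveat worth flagging is that $\alpha_r=-\alpha_f$ is assumed rather than derived — at a genuine fixed point one would also need $\alpha_r=E_{x\sim p_d}[\phi^*(x)]$ and $\alpha_f=E_{z\sim p_z}[\phi^*(G(z))]$, and these need not be antisymmetric in general — but the proposition only asserts the identity under the stated condition, so this does not affect the argument.
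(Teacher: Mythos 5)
Your proposal is correct and follows essentially the same route as the paper's Appendix~B proof: treat $\alpha_r,\alpha_f$ as fixed, minimize the convex quadratic integrand pointwise via the first-order condition to get $\phi^*(x)=\bigl(\tfrac{1}{2\lambda}+\alpha_r\bigr)\tfrac{p_d(x)-p_g(x)}{p_d(x)+p_g(x)}$, and substitute into $L_G=\int(p_d-p_g)\phi^*$. The only cosmetic difference is that you solve for general $\alpha_f$ before imposing $\alpha_f=-\alpha_r$, whereas the paper substitutes first and completes the square; your closing caveat about the fixed-point consistency of $\alpha_r,\alpha_f$ is a fair observation but not needed for the claim.
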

\begin{proof}
\begin{align}
\intertext{Note that $L_D$ in \eqref{eq:wlecam-eq} can be re-expressed by}
L_D=&E_{x \sim{p_d}}[\lambda \|\phi(x)-\alpha_r\|^2-\phi(x)]+E_{x \sim{p_g}}[\lambda\|\phi(x)+\alpha_r\|^2+\phi(x)] \\ 
\begin{split}
=&E_{x \sim{p_d}}[\lambda \|\phi(x)-\alpha_r\|^2-\phi(x)+\alpha_r+\frac{1}{4\lambda}]+\\&E_{x \sim{p_g}}[\lambda\|\phi(x)+\alpha_r\|^2+\phi(x)+\alpha_r+\frac{1}{4\lambda}]+C
\end{split}\\ 
=&\lambda\int p_d(x)(\phi(x)-\alpha_r-\frac{1}{2\lambda})^2+p_g(x)(\phi(x)+\alpha_r+\frac{1}{2\lambda})^2dx+C\\
\intertext{where $C=2\alpha_r-\frac{1}{2\lambda}$. By maximizing the above equation, we obtain the optimal $\phi^*$ as}
&\phi^*(x)=\frac{(\frac{1}{2\lambda}+\alpha_r)(p_d(x)-p_g(x))}{p_d(x)+p_g(x)} \quad \because \frac{dL_D}{d\phi}=0.
\end{align}
\begin{align}
\intertext{By substituting $\phi^*$ in the generator objective,} 
&L_G=\int p_d(x) \phi^*(x)- p_g(x)\phi^*(x)dx\\
&=(\frac{1}{2\lambda}+\alpha_r)\int \frac{(p_d(x)-p_g(x))^2}{p_d(x)+p_g(x)}dx\\
&=(\frac{1}{2\lambda}+\alpha_r)\Delta(p_d \parallel p_g) \notag
\end{align}
\end{proof}

\newpage 

\section{Training losses}
\subsection{SGAN}
\begin{equation*}
\begin{alignedat}{2}
    &L^{sat}_{D}&&=E_{x\sim{p_d}}[-\log D(x)]+E_{x\sim{p_z}}[-\log(1 - D(G(z)))]\\
    &L^{sat}_{G}&&=E_{x\sim{p_d}}[\log D(x)]+E_{x\sim{p_z}}[\log(1 - D(G(z)))] \text{\qquad (Not used)}\\
    &L^{nonsat}_{G}&&=E_{x\sim{p_d}}[-\log(1-D(x))]+E_{x\sim{p_z}}[-\log D(G(z))]
\end{alignedat}
\end{equation*}
where $f_1=\log(1+e^{-x})$, $f_2=\log(1+e^x)$, and $\mathcal{A}=\sigma$.

\subsection{LSGAN}
\begin{equation*}
\begin{alignedat}{2}
    &L_{D}&&=E_{x\sim{p_d}}[(D(x)-1)^2]+E_{x\sim{p_z}}[(D(G(z))+1)^2]\\
    &_{G}&&=E_{x\sim{p_d}}[(D(x)+1)^2]+E_{x\sim{p_z}}[(D(G(z))-1)^2]
\end{alignedat}
\end{equation*}
where $f_1=(x-1)^2$, $f_2=(x+1)^2$, $g_1=f_2$, $g_2=f_1$, and $\mathcal{A}=I$.

\subsection{HingeGAN}
\begin{equation*}
\begin{alignedat}{2}
    &L_{D}&&=E_{x\sim{p_d}}[\max(0, 1-D(x))]+E_{x\sim{p_z}}[\max(0, 1+D(G(z)))]\\
    &L_{G}&&=E_{x\sim{p_d}}[D(x)]-E_{x\sim{p_z}}[D(G(z))]
\end{alignedat}
\end{equation*}
where $f_1=\max(0, 1-x)$, $f_2=\max(0, 1+x)$, $g_1=I$, $g_2=-I$, and $\mathcal{A}=I$.

\subsection{WGAN}
\begin{equation*}
\begin{alignedat}{2}
    &L_{D}&&=E_{x\sim{p_d}}[-D(x)]+E_{x\sim{p_z}}[D(G(z))]\\
    &L_{G}&&=E_{x\sim{p_d}}[D(x)]-E_{x\sim{p_z}}[D(G(z))]
\end{alignedat}
\end{equation*}
where $f_1=-I$, $f_2=I$, $g_1=I$, $g_2=-I$, and $\mathcal{A}=I$.

RGAN, RaGAN, ComGAN, and PacGAN are defined as described in previous sections.

\section{Supplementary plots}
\begin{figure}[H]
\centering
\setlength{\tabcolsep}{0mm}
\subfloat[\centering SRaGAN(C)]{
\includegraphics[width=0.3\textwidth]{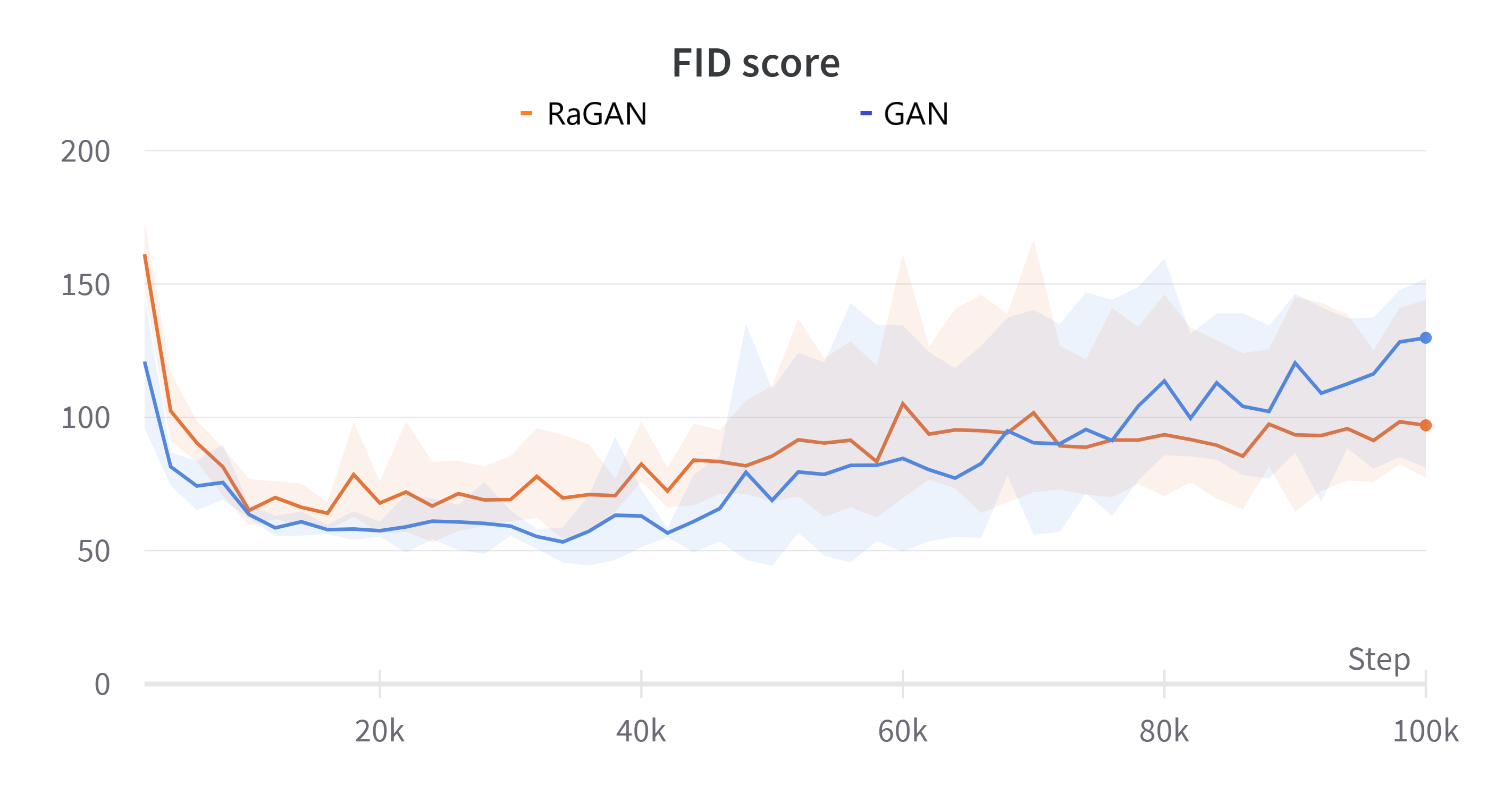} 
}
\subfloat[\centering HingeRaGAN(C)]{
\includegraphics[width=0.3\textwidth]{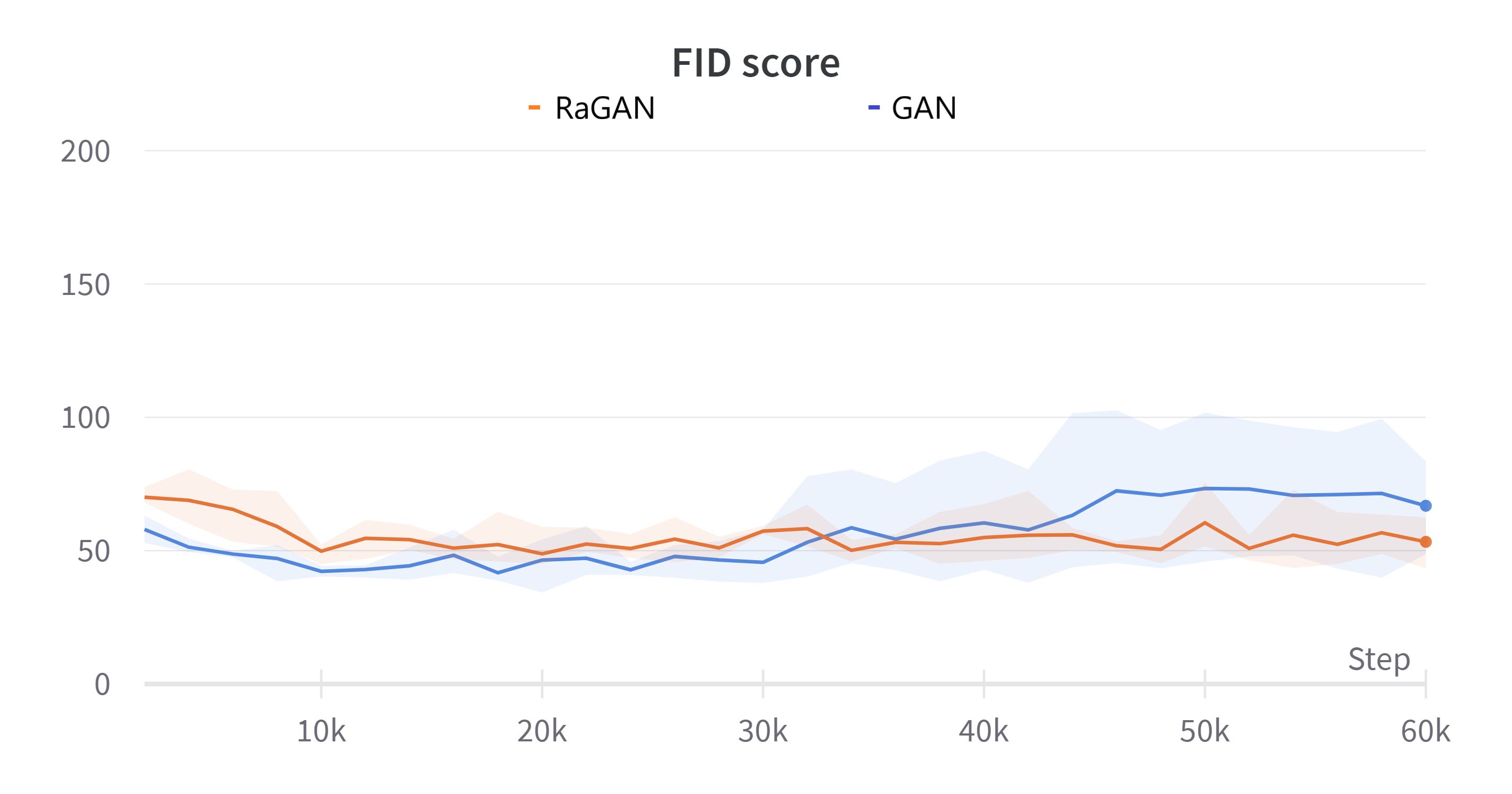} 
}
\subfloat[\centering SRaGAN(T)]{
\includegraphics[width=0.3\textwidth]{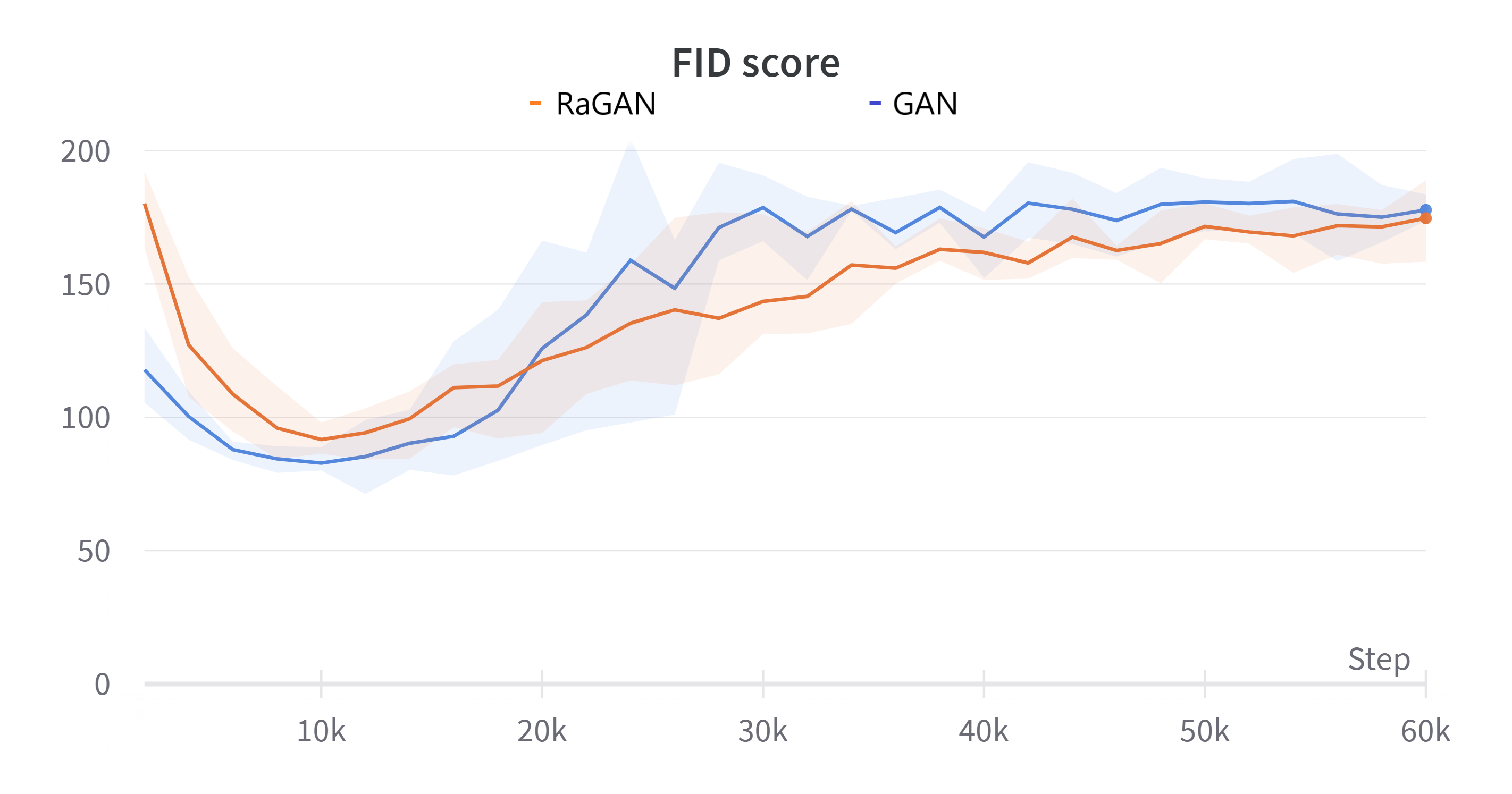} 
}
\caption{FID curves of RaGAN(orange) and ordinary GAN(blue). (C) and (T) denotes CIFAR10 and Tiny-ImageNet(We did not include experiments of HingeRaGAN in Tiny-ImageNet since it was trained for much fewer iterations(20k)).}
\label{fig:ragan-gan}
\end{figure}

\section{Performance tables}
\begin{table}[H]
  \caption{Comprehensive training results in CIFAR10 dataset\citep{cifar10}. We report the best FID and Inception Score during training averaged by 4 runs and Precision/Recall and Density/Coverage metrics at the best FID with their standard deviations. ComGAN denotes ComGAN using input concatenation architecture and FakeGAN and SameGAN denote ComFakeGAN and ComSameGAN using input concatenation architecture respectively.}
  \label{table:cifar10}
  \begin{center}
  \resizebox{\textwidth}{!}{
  \begin{tabular}{lllllll}
    \toprule
    \textbf{Algorithms} & FID $\downarrow$ & IS $\uparrow$ & Precision $\uparrow$ & Recall $\uparrow$ & Density $\uparrow$ & Coverage $\uparrow$ \\
    \toprule (training steps: 100k) \\
    SGAN & 50.65($\pm$4.65) &  6.73($\pm$0.34) & 0.60($\pm$0.03) & 0.29($\pm$0.03) & 0.46($\pm$0.04) & 0.37($\pm$0.04)\\
    SComGAN & 46.56($\pm$2.31) & 6.82($\pm$0.19) & 0.60($\pm$0.03) & 0.32($\pm$0.03) & 0.45($\pm$0.04) & 0.38($\pm$0.01)\\
    SComGAN-eq & 45.71($\pm$7.52) & 6.61($\pm$0.36) & \textbf{0.64}($\pm$0.02) & 0.42($\pm$0.07) & \textbf{0.53}($\pm$0.04) & 0.38($\pm$0.05)\\
    SFakeGAN & 45.55($\pm$4.53) & 6.57($\pm$0.33) & 0.63($\pm$0.03) & 0.33($\pm$0.03) & 0.51($\pm$0.05) & 0.39($\pm$0.05)\\
    SFakeGAN-eq & 40.18($\pm$1.80) & 6.62($\pm$0.11) & \textbf{0.64}($\pm$0.01) & 0.46($\pm$0.02) & \textbf{0.53}($\pm$0.03) & 0.42($\pm$0.02)\\
    SSameGAN & 246.58($\pm$61.19) & 2.42($\pm$0.50) & 0.47($\pm$0.21) &0.00($\pm$0.00) & 0.28($\pm$0.22) & 0.02($\pm$0.02)\\
    SRGAN & 57.28($\pm$3.95) & 6.32($\pm$0.11) & 0.59($\pm$0.01) & 0.26($\pm$0.04) & 0.48($\pm$0.03) & 0.30($\pm$0.03)\\
    SRGAN-eq & \textbf{32.73}($\pm$1.53) & 7.34($\pm$0.07) & 0.63($\pm$0.02) & 0.54($\pm$0.02) & \textbf{0.53}($\pm$0.04) & \textbf{0.48}($\pm$0.03)\\
    SRaGAN & 58.99($\pm$5.76) & 6.28($\pm$0.35) & 0.57($\pm$0.07) & 0.24($\pm$0.03) & 0.42($\pm$0.12) & 0.27($\pm$0.05)\\
    SRaGAN-eq & 32.76($\pm$0.24) & \textbf{7.52}($\pm$0.14) & 0.62($\pm$0.01) & \textbf{0.56}($\pm$0.03) & 0.52($\pm$0.04) & \textbf{0.48}($\pm$0.02)\\
    \midrule (training steps: 100k) \\
    LSGAN & 47.95($\pm$17.71) & 6.71($\pm$0.57) & 0.62($\pm$0.01) & 0.38($\pm$0.15) & 0.48($\pm$0.04) & 0.39($\pm$0.11)\\
    LSComGAN & 76.03($\pm$29.71) & 5.14($\pm$1.27) & \textbf{0.65}($\pm$0.05) & 0.22($\pm$0.19) & 0.51($\pm$0.09) & 0.25($\pm$0.13) \\
    LSComGAN-eq & 37.62($\pm$1.82) & 6.87($\pm$0.14) & \textbf{0.65}($\pm$0.01) & 0.46($\pm$0.03) & \textbf{0.56}($\pm$0.01) & 0.46($\pm$0.02) \\
    LSFakeGAN & 35.45($\pm$1.98) & 7.09($\pm$0.15) & 0.64($\pm$0.01) & 0.48($\pm$0.02) & 0.54($\pm$0.01) & 0.49($\pm$0.04) \\
    LSFakeGAN-eq & 37.15($\pm$1.80) & 6.88($\pm$0.16) & 0.64($\pm$0.01) & 0.50($\pm$0.01) & 0.55($\pm$0.02) & 0.45($\pm$0.02)\\
    LSSameGAN &  201.97($\pm$13.76) & 2.92($\pm$0.39) & 0.57($\pm$0.20) & 0.00($\pm$0.00) & 0.39($\pm$0.27) & 0.03($\pm$0.01)\\
    LSRaGAN & 34.15($\pm$1.36) & 7.55($\pm$0.15) & 0.63($\pm$0.02) & 0.51($\pm$0.01) & 0.54($\pm$0.04) & 0.50($\pm$0.02)\\
    LSRaGAN-eq & \textbf{31.16}($\pm$0.76) & \textbf{7.64}($\pm$0.13) & 0.62($\pm$0.01) & \textbf{0.58}($\pm$0.02) & 0.55($\pm$0.02) & \textbf{0.52}($\pm$0.01)\\
    \midrule (training steps: 50k) \\
    HingeGAN & 37.33($\pm$1.83) & 7.18($\pm$0.26) & \textbf{0.67}($\pm$0.02) & 0.33($\pm$0.01) & \textbf{0.64}($\pm$0.05) & 0.49($\pm$0.02)\\
    HingeComGAN & 37.28($\pm$2.29) & 7.46($\pm$0.15) & 0.65($\pm$0.03) & 0.34($\pm$0.03) & 0.62($\pm$0.07) & 0.49($\pm$0.02)\\
    HingeComGAN-eq & 28.85($\pm$1.66) & 7.57($\pm$0.12) & 0.63($\pm$0.02) & 0.55($\pm$0.01) & 0.54($\pm$0.04) & 0.53($\pm$0.03)\\
    HingeFakeGAN & 36.24($\pm$1.72) & 7.42($\pm$0.07) & 0.63($\pm$0.02) & 0.36($\pm$0.03) & 0.56($\pm$0.05) & 0.48($\pm$0.01)\\
    HingeFakeGAN-eq & 28.69($\pm$1.00) & 7.55($\pm$0.15) & 0.65($\pm$0.00) & 0.55($\pm$0.01) & 0.58($\pm$0.01) & 0.55($\pm$0.01)\\
    HingeSameGAN & 96.97($\pm$28.97) & 4.62($\pm$1.06) & 0.63($\pm$0.05) & 0.07($\pm$0.10) & 0.47($\pm$0.08) & 0.17($\pm$0.10)\\
    HingeRaGAN & 46.31($\pm$2.23) & 7.23($\pm$0.19) & 0.61($\pm$0.04) & 0.33($\pm$0.01) & 0.51($\pm$0.09) & 0.38($\pm$0.04)\\
    HingeRaGAN-eq & \textbf{25.06}($\pm$0.71) & \textbf{7.93}($\pm$0.1) & 0.66($\pm$0.02) & \textbf{0.57}($\pm$0.01) & \textbf{0.64}($\pm$0.03) & \textbf{0.61}($\pm$0.01)\\
    \midrule (training steps: 50k) \\ 
    WGAN-GP & 33.20($\pm$5.91) & 7.07($\pm$0.40) & 0.63($\pm$0.01) & 0.57($\pm$0.05) & 0.54($\pm$0.03) & 0.49($\pm$0.07)\\
    WGAN-GP-eq & 26.96($\pm$0.76) & 7.76($\pm$0.12) & 0.65($\pm$0.01) & \textbf{0.60}($\pm$0.01) & 0.59($\pm$0.01) & 0.57($\pm$0.02)\\
    \textbf{WGAN-eq} & \textbf{24.86}($\pm$0.69) & \textbf{7.90}($\pm$0.12) & 0.65($\pm$0.01) & 0.58($\pm$0.01) & \textbf{0.62}($\pm$0.02) & \textbf{0.62}($\pm$0.01)\\
    WGAN-rf & 46.46($\pm$2.25) & 6.54($\pm$0.20) & 0.61($\pm$0.02) & 0.29($\pm$0.01) & 0.48($\pm$0.03) & 0.35($\pm$0.03)\\
    WComGAN-GP & 79.19($\pm$5.49) & 3.96($\pm$0.22) & \textbf{0.69}($\pm$0.04) & 0.09($\pm$0.02) & 0.60($\pm$0.07) & 0.19($\pm$0.02)\\
    WComGAN-GP-eq\tablefootnote{terminated at 32k step} & 56.71($\pm$37.28) & 6.00($\pm$1.66) & 0.64($\pm$0.06) & 0.36($\pm$0.21) & 0.54($\pm$0.08) & 0.37($\pm$0.15)\\
    WComGAN-eq & 34.92($\pm$1.01) & 7.14($\pm$0.10) & 0.61($\pm$0.01) & 0.48($\pm$0.01) & 0.50($\pm$0.03) & 0.47($\pm$0.00)\\
    WFakeGAN-GP  & 75.55($\pm$5.44) & 4.20($\pm$0.19) & 0.67($\pm$0.02) & 0.12($\pm$0.03) & 0.55($\pm$0.05) & 0.21($\pm$0.02)\\
    WSameGAN-GP\tablefootnote{intentionally terminated at 12k step due to poor performance.} & 338.58($\pm$71.10) & 1.57($\pm$0.46) & 0.56($\pm$0.21)  & 0.00($\pm$0.00) & 0.15($\pm$0.06) & 0.00($\pm$0.01)\\
    \bottomrule
  \end{tabular}}
  \end{center}
\end{table}

\begin{table}[H]
  \caption{Comprehensive training results in Tiny-ImageNet dataset\citep{tinyimg}. We report the best FID and Inception Score during training averaged by 4 runs and Precision/Recall and Density/Coverage metrics at the best FID with their standard deviations. ComGAN denotes ComGAN using input concatenation architecture and FakeGAN and SameGAN denote ComFakeGAN and ComSameGAN using input concatenation architecture respectively.}
  \label{table:tinyimg}
  \begin{center}
  \resizebox{\textwidth}{!}{
  \begin{tabular}{lllllll}
    \toprule
    \textbf{Algorithms} & FID $\downarrow$ & IS $\uparrow$ & Precision $\uparrow$ & Recall $\uparrow$ & Density $\uparrow$ & Coverage $\uparrow$ \\
    \toprule (training steps: 60k) \\
    SGAN & 76.94($\pm$3.45) & 6.99($\pm$0.24) & 0.49($\pm$0.02) & 0.11($\pm$0.03) & 0.33($\pm$0.04) & 0.22($\pm$0.01)\\
    SComGAN & 75.20($\pm$2.07) & \textbf{7.57}($\pm$0.38) &  0.43($\pm$0.04) & 0.13($\pm$0.01) & 0.25($\pm$0.05) & 0.20($\pm$0.02)\\
    SComGAN-eq & 77.74($\pm$5.49) & 7.05($\pm$0.27) & 0.49($\pm$0.06) & 0.15	($\pm$0.04) & 0.33($\pm$0.08) &  0.21($\pm$0.04)\\
    SFakeGAN & 80.43($\pm$5.57) & 6.97($\pm$0.40) & 0.44($\pm$0.05) & 0.13($\pm$0.02) & 0.27($\pm$0.04) & 0.20($\pm$0.02)\\
    SSameGAN & 230.53($\pm$6.10) & 3.20($\pm$0.26) & 0.39($\pm$0.29) & 0.00 ($\pm$0.00) & 0.20($\pm$0.23) & 0.01($\pm$0.00)\\
    SRGAN & 101.32($\pm$2.52) & 6.38($\pm$0.23) & 0.39($\pm$0.02) & 0.03($\pm$0.01) & 0.21($\pm$0.03) & 0.14($\pm$0.01)\\
    SRGAN-eq & 72.24($\pm$0.73) & 6.78($\pm$0.11) &  0.54($\pm$0.01) & 0.18($\pm$0.01) & \textbf{0.39}($\pm$0.03) & 0.23($\pm$0.00)\\
    SRaGAN & 87.13($\pm$3.17) & 6.70($\pm$0.25) & 0.42($\pm$0.05) & 0.07($\pm$0.01) & 0.22($\pm$0.05) & 0.17($\pm$0.01)\\
    SRaGAN-eq & \textbf{69.06}($\pm$3.63) & 7.17($\pm$0.32) &  \textbf{0.56}($\pm$0.02) & \textbf{0.20}($\pm$0.04) & 0.38($\pm$0.03) & \textbf{0.25}($\pm$0.02)\\
    \midrule (training steps: 60k) \\
    LSGAN & 77.91($\pm$2.67) & 6.73($\pm$0.19) & 0.46($\pm$0.02) & 0.13($\pm$0.02) & 0.28($\pm$0.03) & 0.21($\pm$0.02)\\
    LSComGAN & 82.58($\pm$6.03) & 6.93($\pm$0.60) & 0.45($\pm$0.04) & 0.11($\pm$0.02) & 0.28($\pm$0.03) & 0.19($\pm$0.02)\\
    LSComGAN-eq & \textbf{70.34}($\pm$4.87) & \textbf{7.47}($\pm$0.34) & 0.51($\pm$0.02) & \textbf{0.20}($\pm$0.05) & 0.33($\pm$0.03) & \textbf{0.24}($\pm$0.02)\\
    LSFakeGAN & 78.63($\pm$5.58) & 6.86($\pm$0.53) & 0.49($\pm$0.06) & 0.10($\pm$0.02) & 0.33($\pm$0.07) & 0.21($\pm$0.04)\\
    LSRaGAN & 73.70($\pm$4.29) & 6.99($\pm$0.29) & \textbf{0.52}($\pm$0.03) & 0.18($\pm$0.01) & \textbf{0.36}($\pm$0.04) & 0.23($\pm$0.02)\\
    LSRaGAN-eq & 74.29($\pm$3.40) & 6.87($\pm$0.27) & \textbf{0.52}($\pm$0.03) & 0.16($\pm$0.03) & 0.33($\pm$0.03) & 0.22($\pm$0.01)\\
    \midrule (training steps: 20k) \\
    HingeGAN & \textbf{78.46}($\pm$2.02) & \textbf{7.05}($\pm$0.30) & 0.48($\pm$0.06) & 0.12($\pm$0.03) & 0.31($\pm$0.06) & \textbf{0.21}($\pm$0.01)\\
    HingeComGAN & 85.45($\pm$8.72) & 6.63($\pm$0.42) & 0.43($\pm$0.03) & 0.09($\pm$0.03) & 0.26($\pm$0.05) & 0.18($\pm$0.02)\\
    HingeComGAN-eq & 78.70($\pm$2.09) & 6.54($\pm$0.21) & 0.47($\pm$0.02) & 0.12($\pm$0.02) & 0.28($\pm$0.03) & 0.20($\pm$0.01)\\
    HingeFakeGAN & 82.89($\pm$4.74) & 6.70($\pm$0.49) & 0.47($\pm$0.03) & 0.09($\pm$0.02) & 0.29($\pm$0.04) & 0.20($\pm$0.01)\\
    HingeRaGAN & 96.38($\pm$5.87) & 6.05($\pm$0.26) & \textbf{0.52}($\pm$0.08) & 0.04($\pm$0.02) & \textbf{0.35}($\pm$0.12) & 0.17($\pm$0.03)\\
    HingeRaGAN-eq & 82.80($\pm$2.84) & 6.10($\pm$0.13) & 0.50($\pm$0.05) & \textbf{0.13}($\pm$0.01) & 0.30($\pm$0.04) & 0.20($\pm$0.01)\\
    \midrule (training steps: 20k) \\
    WGAN-GP & 62.96($\pm$1.78) & 7.56($\pm$0.21) & 0.47($\pm$0.01) & 0.29($\pm$0.02) & 0.29($\pm$0.01) & 0.25($\pm$0.01)\\
    \textbf{WGAN-GP-eq}\tablefootnote{terminated at 18k step} & \textbf{57.18}($\pm$1.93) & \textbf{8.23}($\pm$0.10) & 0.49($\pm$0.01) & \textbf{0.38}($\pm$0.02) & \textbf{0.32}($\pm$0.01) & \textbf{0.29}($\pm$0.01)\\
    WGAN-eq & 85.72($\pm$1.18) & 6.04($\pm$0.06) & \textbf{0.51}($\pm$0.01) & 0.11($\pm$0.02) & 0.31($\pm$0.01) & 0.19($\pm$0.00)\\
    WGAN-rf & 106.27($\pm$4.45) & 5.65($\pm$0.20) & 0.41($\pm$0.10) & 0.03($\pm$0.01) & 0.24($\pm$0.10) & 0.13($\pm$0.02)\\
    WComGAN-GP & 154.41($\pm$28.23) & 3.49($\pm$0.55) & 0.32($\pm$0.04) & 0.01($\pm$0.01) & 0.12($\pm$0.03) & 0.06($\pm$0.02)\\
    WComGAN-GP-eq & 106.64($\pm$6.38) & 5.04($\pm$0.35) & 0.33($\pm$0.07) & 0.05($\pm$0.02) & 0.16($\pm$0.05) & 0.11($\pm$0.02)\\
    WComGAN-eq & 105.58($\pm$4.02) & 5.52($\pm$0.31) & 0.35($\pm$0.05) & 0.03($\pm$0.02) & 0.18($\pm$0.05) & 0.12($\pm$0.01)\\
    WFakeGAN-GP & 169.11($\pm$23.83) & 3.34($\pm$0.22) & 0.25($\pm$0.11) & 0.00($\pm$0.00) & 0.10($\pm$0.05) & 0.05($\pm$0.02)\\
    \bottomrule
  \end{tabular}}
  \end{center}
\end{table}

\end{document}